\newtheorem{theorem}{Theorem}
\newtheorem{lemma}{Lemma}
\newtheorem{definition}{Definition}
\newtheorem{claim}{Claim}
\newtheorem{remark}{Remark}
\newcommand{\defeq}{:=}
\newcommand{\event}{\mathcal{E}}
\newcommand{\w}{\vec w}
\newcommand{\ws}{\w^\star}
\newcommand{\hw}{h_{\w}}
\newcommand{\hws}{h_{\ws}}
\newcommand\norm[1]{\left\| #1 \right\|}
\renewcommand\vec[1]{\mathbf{#1}}
\DeclareMathOperator*{\pr}{\mathbf{Pr}}
\DeclareMathOperator*{\E}{\mathbf{E}}
\renewcommand{\Pr}{\mathbf{Pr}}
\newcommand{\tO}{\widetilde{O}}
\def\multichoose#1#2{\ensuremath{\left(\kern-.3em\left(\genfrac{}{}{0pt}{}{#1}{#2}\right)\kern-.3em\right)}}
\newcommand{\cube}{\{\pm 1\}}
\newcommand{\B}{\mathbb{B}}
\newcommand{\T}{T_{\vw,\gamma}}
\newcommand{\Tt}{T_{\vw^t,\gamma}}
\newcommand{\g}{{\vec g}}
\newcommand{\R}{\mathbb{R}}
\newcommand{\N}{\mathbb{N}}
\newcommand{\eps}{\epsilon}
\newcommand{\poly}{\mathrm{poly}}
\newcommand{\sgn}{\mathrm{sign}}
\newcommand{\sign}{\mathrm{sign}}
\newcommand{\opt}{\mathrm{opt}}
\newcommand{\D}{D}
\newcommand{\Ind}{\mathbbm{1}}
\newcommand{\1}{\Ind}
\newcommand{\wt}{\widetilde}
\newcommand{\vg}{\vec g}
\newcommand{\x}{\vec x}
\newcommand{\vv}{\vec v}
\newcommand{\vw}{\vec w}
\newcommand{\z}{\vec z}
\newcommand{\citet}{\cite}
\newcommand{\citep}{\cite}
\newcommand{\grad}{\nabla}
\newcommand{\ball}{\mathbb{B}}
\newcommand{\ww}{\vec w}
\newcommand{\cmark}{\ding{51}}%
\newcommand{\xmark}{\ding{55}}%
\newcommand{\lzo}{\ell_{\textup{0-1}}}
\newcommand{\Par}[1]{\left(#1\right)}
\newcommand{\Brack}[1]{\left[#1\right]}
\newcommand{\Brace}[1]{\left\{#1\right\}}
\newcommand{\Abs}[1]{\left|#1\right|}
\newcommand{\supp}{\textup{supp}}
\definecolor{burntorange}{rgb}{0.8, 0.33, 0.0}
\newcommand{\Dx}{D_{\x}}
\newcommand{\Proj}{\boldsymbol{\Pi}}
\newcommand{\0}{\mathbb{0}}
\newcommand{\codeStyle}[1]{{\bfseries #1} }
\newcommand{\codeInput}{\codeStyle{Input:}}	
\newcommand{\codeReturn}{\codeStyle{Return:}}	
\newcommand{\half}{\frac{1}{2}}
\newcommand{\simu}{\sim_{\textup{unif.}}}
\newcommand{\lam}{\lambda}
\newcommand{\tOmega}{\widetilde{\Omega}}
\newcommand{\optrcn}{\opt_{\textup{RCN}}}
\newcommand{\Perspectron}{\mathsf{Perspectron}}
\title{Learning Noisy Halfspaces with a Margin: \\ Massart is No Harder than Random}
\author{Gautam Chandrasekaran\thanks{\texttt{gautamc@cs.utexas.edu}. Supported by the NSF AI Institute for Foundations of Machine Learning (IFML).} \\
	 UT Austin
   \and Vasilis Kontonis\thanks{\texttt{vasilis@cs.utexas.edu}. Supported by the NSF AI Institute for Foundations of Machine Learning (IFML).} \\
	 UT Austin
	 \and Konstantinos Stavropoulos\thanks{\texttt{kstavrop@cs.utexas.edu}. Supported by the NSF AI Institute for Foundations of Machine Learning (IFML) and by scholarships from Bodossaki Foundation and Leventis Foundation.} \\
	 UT Austin
  \and Kevin Tian\thanks{\texttt{kjtian@cs.utexas.edu}. Supported by the NSF AI Institute for Foundations of Machine Learning (IFML).} \\
	 UT Austin
   }
\date{}
\begin{document}

\maketitle
\begin{abstract}
 We study the problem of PAC learning $\gamma$-margin halfspaces with Massart noise. We propose a simple  proper learning algorithm, the Perspectron, that has sample complexity $\widetilde{O}((\epsilon\gamma)^{-2})$ and achieves classification error at most $\eta+\epsilon$ where $\eta$ is the Massart noise rate. 
 Prior works \cite{DiakonikolasGT19, ChenKMY20}  came with worse sample complexity
 guarantees (in both $\epsilon$ and $\gamma$) or could only
 handle random classification noise \cite{DiakonikolasDKWZ23, KontonisITBMV23} --- a much milder noise assumption. 
We also show that our results extend to the more challenging setting of learning generalized linear models with a known link function under Massart noise, achieving a similar sample complexity to the halfspace case. This significantly improves upon the prior state-of-the-art in this setting due to \cite{ChenKMY20}, who introduced this model.
\end{abstract}

\section{Introduction}
\label{sec:intro}

We study the problem of learning halfspaces with a margin, one of the oldest problems in the field of machine learning dating to work of Rosenblatt \cite{Rosenblatt58}. Specifically, we consider the following formulation of this problem, where the label distribution is corrupted by Massart noise \cite{Massart2006}, where we use the following notation for halfspace hypotheses $\hw: \R^d \to \cube$:
\begin{equation}\label{eq:halfspace_def}
\hw(\x) \defeq \sign\Par{\ww \cdot \x}, \text{ for } \w \in \R^d.
\end{equation}

\begin{definition}[Massart halfspace model]\label{defn:massart_half}
Let $\eta\in [0,\frac{1}{2}]$ and $\gamma \in (0, 1)$. We say that a distribution $D$ on $\B^d \times \cube$ is an instance of the \emph{$\eta$-Massart halfspace model with margin $\gamma$} if:
\begin{itemize}
\item There exists $\ws \in \R^d$ such that $\norm{\ws} = 1$,\footnote{This normalization is made for convenience, as the noise assumption of \Cref{defn:massart_half} is scale-invariant.} and $D_{\x}$ has margin $\gamma$ with respect to $\ws$,\footnote{See \Cref{sec:prelims} for our notation; $D_{\x}$ is the $\x$-marginal of $D$, and $D_{y}(\x)$ is the conditional marginal of $y \mid \x$.} i.e.,  
\(\pr_{\x\sim D_{\x}}[|\vw^\star\cdot \x|<\gamma]=0.\)
\item For all $\x \in \supp(D_{\x})$, there is an $\eta(\x) \in [0, \eta]$ such that
$
\pr\Brack{y \neq \hws(\x) \mid \x} = \eta(\x)
$.
\end{itemize}
\end{definition}

We note that \Cref{defn:massart_half} extends straightforwardly to general halfspaces up to rescaling (i.e., with larger domain size bounds and constant shift terms), as discussed in \Cref{rem:extensions}.

The \emph{Massart noise} model of \Cref{defn:massart_half} for halfspaces (and more generally, binary classification problems) has garnered interest from the statistics, machine learning, and algorithms communities for a variety of reasons. 
This noise model was originally introduced as an intermediate noise model, between the simpler (from an algorithmic design standpoint) \emph{random classification noise} (RCN) model \cite{AngluinL88}, and the more challenging \emph{agnostic} model \cite{Haussler92, KearnsSS94}.
In the RCN model, $\eta(\x)$ in \Cref{defn:massart_half} is restricted to be $\eta$ pointwise, i.e., the noise level is uniform; polynomial-time PAC learning has long since known to be tractable under RCN \cite{Bylander94, BlumFKV98}. On the other hand, in the agnostic model (where learning is computationally intractable under well-studied conjectures \cite{GuruswamiR06, FeldmanGKP06, Daniely16}), an adversary is allowed to arbitrarily modify an $\eta$ fraction of labels.
As observed by \cite{Sloan88}, the Massart noise model of \Cref{defn:massart_half} is equivalent to allowing an \emph{oblivious} adversary control an $\eta$ fraction of labels, where the $\eta$ fraction is crucially sampled independently at random. It was stated as a longstanding open question \cite{Cohen97, Blum03} whether this obliviousness of the adversary impacts the polynomial-time tractability of learning halfspaces, even with a margin.

For additional motivation, it is reasonable to consider Massart noise to be a more realistic model of real-life noise (even when benign) when compared to the RCN model, as it allows for some amount of non-uniformity. This made \Cref{defn:massart_half} a possibly tractable way to relax the noise assumption, without running into the aforementioned computational barriers for agnostic learning. In a series of recent exciting developments, in large part spurred by the breakthrough work of \cite{DiakonikolasGT19} who gave an (improper) polynomial-time PAC learning algorithm in the Massart halfspace model, significant algorithmic advances have been made towards understanding the polynomial-time tractability of learning under Massart noise 
\cite{AwasthiBHU15,DiakonikolasGT19,DKTZ20,ChenKMY20, zhang2021improved, DKKTZGeneral}. However, less is understood about the fine-grained sample and computational complexity of these problems, which is potentially of greater interest from a practical perspective.

We investigate this question of fine-grained complexity for the Massart halfspace model, inspired by a line of recent work on \emph{semi-random models} \cite{BlumS95}, a popular framework for understanding the overfitting of algorithms to their modeling assumptions. To motivate semi-random models, observe that from a purely information-theoretic standpoint, one might suspect that learning under Massart noise is actually \emph{easier} than RCN; the noise level $\eta(\x)$ is only allowed to decrease, giving more ``signal'' with respect to $\ws$. However, this modification poses challenges when designing algorithms, e.g., because it breaks independence between $y$ and $\x$ beyond the value $\sign(\ws \cdot \x)$. Indeed, this is reflected in our current knowledge of halfspace learning algorithms. While it is known that one can learn halfspaces with margin $\gamma$ under the RCN model to $\eps$ error (in the zero-one loss) using $\tO((\eps\gamma)^{-2})$ samples \cite{DiakonikolasDKWZ23, KontonisITBMV23}, state-of-the-art learners under Massart noise use $\tO(\gamma^{-4}\eps^{-3})$ samples (if required to be proper) \cite{ChenKMY20} or $\tO(\min(\gamma^{-4}\eps^{-3}, \gamma^{-3}\eps^{-5}))$ samples (otherwise) \cite{DiakonikolasGT19}. The semi-random model framework posits that this reflects a lack of robustness in current sample-efficient algorithms for learning halfspaces, due to their overfitting to the RCN assumption.

For many statistical learning problems, new algorithms have been developed under semi-random modeling assumptions, with guarantees matching, or nearly-matching, classical algorithms under the corresponding fully random models \cite{ChengG18, KelnerLLST23, JambulapatiLMSST23, GaoC23, BlumGLMSY24}. This leads us to our motivating problem, which aims to accomplish this goal for learning halfspaces.
\begin{gather*}
\textit{Can we design algorithms for learning in the Massart halfspace model with} \\
\textit{sample complexities matching the state-of-the-art for learning in the RCN model?}
\end{gather*}

\subsection{Our results}
\label{ssec:results}

As our main contribution, we resolve this problem in the affirmative, in the setting of~\Cref{defn:massart_half}. We also extend our results to a substantial generalization of this model in~\Cref{defn:massart_glm}. 

\paragraph{Massart halfspace model.} 
We begin with our basic result in the setting of the Massart halfspace model,~\Cref{defn:massart_half}. Our goal in this setting is to find a \emph{proper} hypothesis halfspace $h_{\w}(\x) = \sign(\w \cdot \x)$ for $\w \in \B^d$, achieving good zero-one loss $\lzo$ (see Section~\ref{sec:prelims} for a definition) over examples $(\x, y)$ drawn from the distribution $D$. Our main result to this end is the following.
\begin{theorem}[Informal, see~\Cref{theorem:halfspace-massart}]
    \label{thm:halfspace_intro}
    Let $D$ be an instance of the $\eta$-Massart halfspace model with margin $\gamma$, and let $\eps \in (0, 1)$. Then,  $\mathsf{Perspectron}$ (\Cref{alg:massart_margin}) returns $\w \in \B^d$ such that $\lzo(\w) \le \eta + \eps$ with probability $0.99$,\footnote{The formal variant, \Cref{theorem:halfspace-massart}, gives high-probability bounds at a mild polylogarithmic overhead in sample and runtime complexities.} using $\tO(\gamma^{-2}\eps^{-2})$ samples  and $\tO(d\gamma^{-2}\eps^{-4})$ time.
\end{theorem}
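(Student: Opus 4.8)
The plan is to analyze $\Perspectron$ (\Cref{alg:massart_margin}) as a \emph{margin-localized, Massart-corrected} variant of the classical perceptron. The algorithm maintains an iterate $\w^t$ (initialized at the origin) and repeatedly performs a perceptron-style step $\w^{t+1} = \w^t + \lambda \g^t$, where $\g^t$ is an \emph{averaged} update computed only from the examples that land in the current uncertainty band $\Tt$ --- the thin slab of inputs on which $h_{\w^t}$ is least confident --- and where each example contributes through a $\LR$ (LeakyReLU) reweighting calibrated to the noise parameter $\eta$. As in the textbook perceptron proof, correctness follows by simultaneously tracking two potentials: the alignment $\langle \w^t, \ws\rangle$, which we show increases appreciably on every step that has not yet reached the target error, and the norm $\norm{\w^t}$, which we show grows slowly; since $\langle \w^t,\ws\rangle / \norm{\w^t} \le 1$, this caps the number of iterations.

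The heart of the argument is a per-step progress lemma: whenever $\lzo(\w^t) > \eta + \eps$, the expected reweighted update $\g^t$ (conditioned on the band $\Tt$, which in this regime carries enough $D_\x$-mass for the update to be well-defined and stable) satisfies $\langle \g^t, \ws\rangle = \Omega(\eps\gamma)$, while $\norm{\g^t} = O(1)$ and the cross term $\langle \w^t, \g^t\rangle$ is nonpositive up to lower-order terms --- the latter because $|\w^t\cdot\x|$ is tiny on $\Tt$ by construction. Feeding these estimates into the perceptron bookkeeping, after $T$ such steps one gets $\langle\w^T,\ws\rangle = \Omega(T\lambda\eps\gamma)$ and $\norm{\w^T} = O(\lambda\sqrt T)$, so $1 \ge \langle\w^T,\ws\rangle/\norm{\w^T} = \Omega(\sqrt T\,\eps\gamma)$ forces $T = \tO((\eps\gamma)^{-2})$; hence some visited iterate satisfies $\lzo(\w^t) \le \eta+\eps$, and a final selection step among the visited iterates returns it.

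I expect the progress lemma to be the main obstacle. Under \emph{random} classification noise one has $\E[y\mid\x] = (1-2\eta)\hws(\x)$ pointwise, so an averaged perceptron update automatically correlates with $\ws$; under Massart noise the rate $\eta(\x)$ varies arbitrarily over $[0,\eta]$, and the danger is that the integrated update over the band \emph{cancels}. The remedy is the $\LR_\lambda$ reweighting with $\lambda$ tuned around (or slightly above) $\eta$: the resulting expected update is the gradient of a convex surrogate that acts as a \emph{certificate of suboptimality} --- whenever $h_{\w^t}$ is $\eps$-far from optimal it points toward $\ws$ regardless of how the Massart adversary allocates its noise budget, via an averaging argument that crucially uses that on the thin slab $\Tt$ the contribution of $\w^t\cdot\x$ is negligible next to the guaranteed margin $|\ws\cdot\x|\ge\gamma$. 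This is the certificate mechanism in the spirit of \cite{DiakonikolasGT19}, but reconciled with margin-based localization so as to extract the optimal $\gamma$-dependence; making the two compatible --- in particular controlling how the band and its $D_\x$-mass evolve as $\w^t$ rotates toward $\ws$, and ensuring the certificate is robust enough to survive estimation from samples --- is the delicate part.

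For the statistical and computational bounds, the subtlety is that $\Tt$ depends on the history $\w^1,\dots,\w^t$, so one cannot naively reuse samples with a martingale. Instead $\Perspectron$ estimates each averaged update from a batch of examples falling in the current band and re-estimates it at every iteration, while a \emph{dimension-free} uniform-convergence bound --- valid because the band indicator and the reweighted update depend on $\x$ only through the two bounded, margin-separated projections $\w\cdot\x$ and $\ws\cdot\x$, so the relevant complexity is governed by $\gamma$ rather than $d$ --- guarantees that for every iterate simultaneously the empirical band mass and empirical update are within $O(\eps\gamma)$ of their population values, which suffices to carry the progress lemma through on the reused data. This keeps the total sample complexity at $\tO((\eps\gamma)^{-2})$, while the extra $\eps^{-2}$ factor in the running time over the sample complexity is precisely the cost of re-estimating the averaged update at each of the $\tO((\eps\gamma)^{-2})$ iterations, at $O(d)$ per sample point, together with a final empirical-zero-one-loss comparison among the visited iterates; this tallies to the claimed $\tO(d\gamma^{-2}\eps^{-4})$ time. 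The high-probability strengthening stated in \Cref{theorem:halfspace-massart} follows by the usual confidence boosting at a $\polylog$ overhead.
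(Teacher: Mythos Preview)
Your proposal describes a different algorithm than $\Perspectron$, and the mechanism you describe does not obviously reach the claimed sample complexity. The actual $\Perspectron$ update (\Cref{alg:massart_margin}) uses a \emph{single fresh sample} per iteration and no band conditioning:
\[
\w^{t+1} \gets \w^t - \lambda\,\frac{\beta\,\sign(\w^t\cdot\x^t) - y^t}{|\w^t\cdot\x^t| + \gamma}\,\x^t,
\]
so the localization is implemented by a \emph{soft} inverse-margin reweighting $1/(|\w\cdot\x|+\gamma)$, not by restricting to a thin slab $\Tt$. This reweighting is the main new idea; the matching structural fact (\Cref{lem:gradient_lb}) shows that whenever $\lzo(\w)\ge\eta+\eps$ one has $\E\big[(\beta\sign(\w\cdot\x)-y)\tfrac{\x}{|\w\cdot\x|+\gamma}\big]\cdot(\w-\ws)\ge 2\eps$, via a pointwise casework on the agreement region $A=\{\x:\hw(\x)=\hws(\x)\}$ that uses the margin $|\ws\cdot\x|\ge\gamma$ to control the padded denominator. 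Because this one-sample estimator has norm $\le 2/\gamma$ always, a stochastic potential argument on $\E[\1\{\text{all iterates bad}\}\cdot\|\w^t-\ws\|^2]$ with $\lambda\asymp\gamma/\sqrt T$ gives $T=O((\eps\gamma)^{-2})$ iterations and hence samples; no batching, rejection sampling, or uniform convergence is used.

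By contrast, your plan is hard band conditioning, which is precisely the \cite{ChenKMY20} mechanism the paper is improving on. The band $\Tt$ has no a priori lower bound on its $D_\x$-mass, so either you pay a rejection-sampling factor to collect in-band examples, or you reuse samples across iterations. Your fix is the latter via a ``dimension-free'' uniform-convergence bound, but the justification does not hold: the band indicator and the update depend on the \emph{moving} iterate $\w^t$, so the relevant function class is indexed by all of $\B^d$ and its complexity scales with $d$ (equivalently $\gamma^{-2}$ after random projection), not with $\gamma$ alone. That for each fixed $\w$ the functional depends on $\x$ only through two one-dimensional projections does not help when one projection direction ranges over a high-dimensional family and the other ($\ws$) is unknown. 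Pushed through, this route yields at best $\tO(\gamma^{-4}\eps^{-2})$ samples (exactly what the paper notes one gets by plugging the new certificate into a cutting-plane loop), not $\tO(\gamma^{-2}\eps^{-2})$. The missing ingredient is the soft reweighting $1/(|\w\cdot\x|+\gamma)$, which dispenses with the band entirely and lets a single-sample, bounded-norm stochastic step drive the perceptron analysis.
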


We pause to comment on~\Cref{thm:halfspace_intro}. First, our error guarantee is of the form $\eta + \eps$ rather than the more stringent goal of $\lzo(\ws) + \eps = \E_{\x \sim \Dx}[\eta(\x)] + \eps$. There is strong evidence that this distinction is necessary for polynomial-time algorithms in the statistical query (SQ) framework of \cite{Kearns:98}, which our algorithm is an instance of, due to \cite{ChenKMY20,DK20-SQ-Massart,NasserT22}. Next, the sample complexity bound of~\Cref{thm:halfspace_intro} matches the results of \cite{DiakonikolasDKWZ23,KontonisITBMV23}, the state-of-the-art under the milder RCN model. There is evidence that the dependences in~\Cref{thm:halfspace_intro} on both $\eps^{-1}$ and $\gamma^{-1}$ are individually tight. In particular, \cite{Massart2006} shows the sample complexity of the problem is $\tOmega(\gamma^{-2}\eps^{-1})$, and \cite{DiakonikolasDKWZ23} shows any efficient algorithm in the SQ framework must use $\tOmega(\gamma^{-1/2}\eps^{-2})$ samples. We also remark that we can assume without loss of generality that $\eta$ is known (see \Cref{sec:tolerant_noise}).

Finally, as mentioned, prior to our work, the best-known polynomial-time learners under \Cref{defn:massart_half} had sample complexities $\tO(\min(\gamma^{-4}\eps^{-3}, \gamma^{-3}\eps^{-5}))$ \cite{ChenKMY20,DiakonikolasGT19}. In Table~\ref{tab:results}, we summarize relevant sample complexity bounds for learning variants of halfspace models with noise.

\renewcommand{\arraystretch}{1.5}
\begin{table}[ht]
\centering
\begin{tabular}{|c|c|c|c|c|}
\hline
Source & RCN & Massart & Proper& Sample Complexity\\
\hline
\hline
 \cite{DiakonikolasGT19} & \cmark & \xmark & \cmark & $\gamma^{-4}\eps^{-2}$\\
\hline
\cite{DiakonikolasDKWZ23,KontonisITBMV23}& \cmark & \xmark & \cmark& $\gamma^{-2}\eps^{-2}$\\
\hline
\cite{DiakonikolasGT19}& \cmark & \cmark & \xmark& $\gamma^{-3}\eps^{-5}$\\
\hline
\cite{ChenKMY20} &\cmark & \cmark & \cmark& $\gamma^{-4}\eps^{-3}$\\
\hline
\textbf{\Cref{thm:halfspace_intro}}& \cmark & \cmark & \cmark& $\gamma^{-2}\eps^{-2}$\\
\hline
\end{tabular}
\caption{Sample complexities of learning halfspaces with $\gamma$ margin, omitting logarithmic factors and failure probabilities for brevity. All the algorithms above run in polynomial time.}
\label{tab:results}
\end{table}

%

\paragraph{Massart generalized linear models.} 

Our second result is an extension of~\Cref{thm:halfspace_intro} to the more challenging setting of learning generalized linear models (GLMs) with a known link function $\sigma$ under Massart noise. As before, we only consider distributions that have a margin with respect to the optimal halfspace. We now state a slightly-simplified variant of our setting (cf.\ \Cref{rem:sig_odd}).

\begin{definition}[Massart GLM, simplified]
\label{defn:massart_glm}
Let $\sigma:[-1, 1]\to [-1,1]$ be an odd, non-decreasing function. We say that a distribution $D$ on $\B^d \times \cube$ is an instance of the \emph{$\sigma$-Massart generalized linear model (GLM) with margin $\gamma$} if the following conditions hold.
\begin{enumerate}
  \setlength{\itemsep}{2pt}
  \setlength{\parskip}{0pt}
    \item There exists $\ws \in \R^d$ such that $\norm{\ws} = 1$ and $\pr[|\vw^\star\cdot \x|< \gamma]=0$.
    \item For all $\x \in \supp(\Dx)$, it holds that
    $\eta(\x) \defeq \pr[y \neq \hws(\x) \mid \x]
    \leq \frac{1 - |\sigma(\ws \cdot \x)|}{2}$.
\end{enumerate}
\end{definition}
\begin{remark}\label{rem:sig_odd}
The assumption that $\sigma$ is odd is also commonly used in prior works (see, e.g., \cite{CN08, DKTZ20, ChenKMY20}). We further show that our result extends to $\sigma$ with bounded asymmetry, albeit with a weaker error guarantee (see \Cref{defn:massart_glm_appendix} and \Cref{theorem:glm-massart}).
\end{remark}
To provide intuition for \Cref{defn:massart_glm}, observe that that if $\eta(\x) = \frac{1 - |\sigma(\ws \cdot \x)|}{2}$ for some $\x \in \supp(\Dx)$, then
$\E\Brack{y\mid \x}
= |\sigma(\ws \cdot \x)|\sign(\ws \cdot \x) = \sigma(\ws \cdot \x)$, i.e., \Cref{defn:massart_glm}
corresponds to the standard GLM definition.
In \Cref{defn:massart_glm} (compared to \Cref{defn:massart_half}), we replace the fixed noise rate upper bound $\eta$ with a data-dependent upper bound which is monotone (i.e., decreases as $|\ws \cdot \x|$ grows more confident). 
That is, \Cref{defn:massart_glm} 
generalizes the problem of learning Massart halfspaces, which follows
by taking $\sigma(t) = (1 - 2\eta) \sign(t)$ for all $t \in [-1, 1]$.

When working with a Massart GLM $D$, we define
\[\optrcn \defeq \E_{(\x, y) \sim D}\Brack{\frac{1 - |\sigma(\ws \cdot \x)|}{2}}.\]
Note that in the special case of a Massart halfspace model, we simply have $\optrcn = \eta$. As in the case of Massart halfspaces, known SQ lower bounds make competing with $\opt \defeq \lzo(\ws) \defeq 
 \pr_{(\x, y) \sim D}\Brack{y \neq \hws(x)}$
 an intractable target, so our focus is again on attaining $\lzo(\w) \approx \optrcn$. 

To our knowledge, the model in \Cref{defn:massart_glm} was first studied in \cite{ChenKMY20}, though we note that similar models have been considered in prior works \cite{ZhangLC17, HKLM20, DKTZ20}, which we describe and compare to \Cref{defn:massart_glm} in Section~\ref{sec:related_work}.
In \cite{ChenKMY20}, the parameterization of this model is slightly different; they assume $\sigma$ is $L$-Lipschitz and that $\Pr_{\x \sim \Dx}[|\sigma(\vw^\star \cdot \x)| \ge \gamma] = 1$, i.e., they impose a margin on $\sigma(\vw^\star \cdot \x)$ rather than $\vw^\star \cdot \x$. This implies our margin assumption (with $\gamma \gets \frac \gamma L$ in \Cref{defn:massart_glm}), but not vice versa, so our \Cref{defn:massart_glm} is slightly more general.

Under their slightly more restrictive assumptions, \cite{ChenKMY20} claims a runtime which is an unspecified polynomial in $L\gamma^{-1}\eps^{-1}$, that is at least $\tOmega(L^4 \gamma^{-4}\eps^{-6})$ when specialized to the halfspace case (see their Theorems~{5.2} and {6.14}). On the other hand, we achieve improved rates extending our simple algorithmic approach for the Massart halfspace case in this more challenging setting.

\begin{theorem}[Informal, see \Cref{theorem:glm-massart}]
    \label{theorem:glm_intro}
Let $D$ be an instance of the $\sigma$-Massart GLM with margin $\gamma$, and let $\eps \in (0, 1)$. There is an algorithm returning $\w \in \B^d$ so that $\lzo(\w) \le \optrcn + \eps$ with probability $0.99$, using $\tO(\gamma^{-2}\eps^{-4})$ samples and $\tO(d\gamma^{-2}\eps^{-6})$ time.
\end{theorem}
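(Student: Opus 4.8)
The plan is to lift the $\Perspectron$ and its potential-based analysis from the Massart halfspace setting (\Cref{theorem:halfspace-massart}) to GLMs, adding one new ingredient to cope with the unknown, nonconstant link $\sigma$. The algorithm maintains projected iterates $\vw^{t,j}\in\B^d$ and, at each step, builds a first-order update from a leaky-ReLU-type surrogate evaluated on the band $\Ttj$ of examples whose current margin $|\vw^{t,j}\cdot\x|$ is small; the slope $\gstep$ of the surrogate, swept over by the inner loop (index $j$), is the new parameter.

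Step one is the \emph{loss-versus-disagreement} inequality: for every $\vw$,
\[\lzo(\vw)-\optrcn\;\le\;\E_{(\x,y)\sim D}\Brack{\,|\sigma(\ws\cdot\x)|\cdot\ind\Brack{\hw(\x)\ne\hws(\x)}\,}.\]
This follows by conditioning on $\x$: on $\{\hw=\hws\}$ the conditional error $\eta(\x)\le\tfrac{1-|\sigma(\ws\cdot\x)|}{2}$ is at most the integrand defining $\optrcn$, so that region contributes nonpositively; on $\{\hw\ne\hws\}$ the conditional error is $1-\eta(\x)$ and $1-\eta(\x)-\tfrac{1-|\sigma(\ws\cdot\x)|}{2}\le|\sigma(\ws\cdot\x)|$. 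Hence if $\lzo(\vw^{t,j})>\optrcn+\eps$ then $\E[\,|\sigma(\ws\cdot\x)|\,\ind[\hw(\x)\ne\hws(\x)]\,]>\eps$ with $\vw=\vw^{t,j}$: a nonnegligible amount of correctly-signed signal is concentrated on the region where $h_{\vw^{t,j}}$ and $\hws$ disagree. On that region the margin forces $|\ws\cdot\x|\ge\gamma$, and oddness/monotonicity of $\sigma$ give $\E[y\mid\x]=(1-2\eta(\x))\hws(\x)$ with $1-2\eta(\x)\ge|\sigma(\ws\cdot\x)|$, so the signal aligns with $\ws$ and has controllable magnitude.

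Step two is the \emph{gradient-correlation lemma}. Since the derivative of the leaky-ReLU surrogate up-weights exactly the band examples currently misclassified by $\vw^{t,j}$ — and inside the band, being misclassified by $\vw^{t,j}$ essentially coincides with disagreeing with $\hws$ — the above yields, for an appropriate slope $\gstep$, a population update $\vg^{t,j}$ with $\la\vg^{t,j},\ws\ra\gtrsim\poly(\eps)\cdot\gamma$ (weaker in $\eps$ than the $\Omega(\eps\gamma)$ bound of the halfspace case) while $\la\vg^{t,j},\vw^{t,j}\ra$ stays bounded, so a bounded-norm projected step strictly decreases $\|\vw^{t,j}-\ws\|_2^2$. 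The margin-perceptron potential argument then drives the population $\lzo$ below $\optrcn+\eps$; replacing population expectations by fresh-minibatch empirical averages (with the same margin-controlled concentration used in \Cref{theorem:halfspace-massart}, at accuracy matching the weakened signal) and outputting the iterate of least empirical $\lzo$ yields the claimed $\tO(\gamma^{-2}\eps^{-4})$ sample and $\tO(d\gamma^{-2}\eps^{-6})$ time bounds, the extra $\eps^{-2}$ over the halfspace case entering through the weaker link-dependent signal and the slope sweep.

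The main obstacle is precisely the choice of $\gstep$. In the halfspace case one slope, tied to the noise bound $\eta$, makes the disagreement-region signal dominate any opposing contribution from the agreement region; under a general link the pointwise effective margin $1-2\eta(\x)\le|\sigma(\ws\cdot\x)|$ varies and is unknown, so no fixed reweighting is globally correct. Resolving this — sweeping over $\poly(1/\eps)$ candidate slopes and bands $\Ttj$ and arguing that some candidate simultaneously captures an $\Omega(\eps)$-fraction of the $|\sigma(\ws\cdot\x)|$-mass on the disagreement region while keeping the agreement-region contribution negligible, or equivalently clipping the surrogate at a level that trades a $\poly(\eps)$ factor of signal for this robustness — is where essentially all the new work lies; the rest is a careful re-run of the halfspace potential and concentration bookkeeping.
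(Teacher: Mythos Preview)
Your proposal diverges from the paper's argument in two important ways, and the second is a genuine gap.

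First, the paper does \emph{not} use band conditioning or a slope sweep. Its update is the direct GLM analogue of the $\Perspectron$ increment: it replaces $\beta\,\sign(\vw\cdot\x)$ by $\sigma(\vw\cdot\x)$ and uses the soft reweighting $(|\vw\cdot\x|+\alpha\gamma)^{-1}$ with $\alpha=\eps/(2-\eps)$. The key structural lemma (\Cref{lem:separating-hyperplane-massart-glm}) shows that $\E[(\sigma(\vw\cdot\x)-y)\,\x/(|\vw\cdot\x|+\alpha\gamma)]\cdot(\vw-\ws)\ge\eps$ whenever $\lzo(\vw)\ge\optrcn+\eps$, via a case split on the disagreement region refined by whether $|\vw\cdot\x|\gtrless|\ws\cdot\x|$; the $\alpha=O(\eps)$ scaling is what makes the hard sub-case $|\vw\cdot\x|<|\ws\cdot\x|$ go through. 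The entire $\eps^{-2}$ overhead relative to the halfspace result comes from this single choice: the increment norm is now $O((\alpha\gamma)^{-1})=O((\eps\gamma)^{-1})$, so the potential argument of \Cref{lem:sgd} needs $T\gtrsim(\eps\gamma)^{-2}\eps^{-2}$ iterations.

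Second, your stated ``main obstacle'' rests on a misreading: the link $\sigma$ is \emph{known} in \Cref{defn:massart_glm} (the paper says so explicitly in the abstract and in the limitations section), so there is nothing to sweep over. Using $\sigma(\vw\cdot\x)$ in place of a fixed-slope leaky-ReLU is exactly what avoids the issue you raise about pointwise-varying $1-2\eta(\x)$. Consequently your accounting for the extra $\eps^{-2}$ (``weaker link-dependent signal and the slope sweep'') does not match any step that actually occurs, and the band-conditioning route you sketch is precisely the \cite{ChenKMY20} mechanism the paper's reweighting is designed to replace; without a concrete bound on the band probability and the per-slope signal you have not justified the claimed $\tO(\gamma^{-2}\eps^{-4})$ rate. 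Your Step-one inequality is fine and close in spirit to the paper's \eqref{eq:glm_equiv}, but the rest of the plan should be reworked around the known-$\sigma$ update and the $\alpha\gamma$ padding.
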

In particular, parameterizing our problem using the margin and Lipschitz assumptions in \cite{ChenKMY20} (with $\gamma \gets \frac \gamma L$), we obtain an improved sample complexity of $\tO(L^2\gamma^{-2}\eps^{-4})$.

\subsection{Technical overview}
\label{ssec:tech_overview}

\paragraph{Learning Massart halfspaces.}
\label{sec:intro_massart_overview}
Our learning algorithms are inspired by the certificate framework
for learning with semi-random noise developed in \cite{DKTZ20,ChenKMY20}.  In that framework, given a sub-optimal hypothesis $\vec w$, i.e.,
with error $\pr_{(\x, y) \sim D}[\sgn(\vec w \cdot \x) \neq y]
\geq \eta + \eps$,
the goal is to construct a certificate of sub-optimality in the form of a separating hyperplane between $\vec w$ and the target $\vw^\star$, i.e., a vector $\vg$ such that $\vg \cdot \w \ge \vg \cdot \ws \iff \vg \cdot (\w - \ws) \ge 0$.  Given such a separating hyperplane, prior works
rely on cutting-plane methods (e.g., \cite{Vaidya:96})
or on first-order regret minimization methods to learn a hypothesis achieving the target error.

We first describe our algorithm in the halfspace setting, by motivating our choice of a certificate. Prior work \cite{ChenKMY20} achieving a proper Massart halfspace learner uses the gradient of the Leaky-ReLU 
objective
$\ell_{\eta}(t)\defeq
(1-\eta) \max(0, t)
- \eta \max(0, -t)$
conditioned on a band around the current hypothesis $\vec w$ as a separating hyperplane. That is, they argue that for some appropriate interval $I$, it holds that
$\E[ \nabla \ell_{\eta}(- y \vec w \cdot \x ) \mid \x \cdot \vec w \in I]
\cdot (\vec w-\vw^\star) \geq 0$.  This yields a sample complexity scaling as $\tilde{O}(\epsilon^{-3})$ because one has to sample condionally from the band $I$ and estimate the gradient of the Leaky-ReLU on these samples up to error $\epsilon$, as well as additional overhead in $\gamma^{-1}$ due to use of expensive outer loops taking advantage of these certificates, such as cutting-plane methods.

To avoid the sample complexity overhead of this conditioning (implemented via rejection sampling), we use a simple reweighting scheme pointwise, which puts a larger weight of $|\vw \cdot \x|^{-1}$ (an inverse margin) on points closer to the boundary of our current hypothesis $h_{\w}$. Intuitively, this reweighting is a soft implementation of the hard conditioning done in \cite{ChenKMY20}. This is motivated by our first important observation: any significantly sub-optimal hypothesis $\w$ with $\lzo(\w) \ge \eta + \eps$ satisfies
\[\E\Brack{\frac{\nabla \ell_\eta(-y\w \cdot \x)}{|\w \cdot \x|}} \cdot \Par{\w - \ws}  \geq \eps,\]
proven in \Cref{claim:warmup_separating_hyperplane}. This suggests using $\vg = \E[\nabla \ell_\eta(-y\w \cdot \x) |\w \cdot \x|^{-1}]$ (rather than $\E[\nabla \ell_\eta(-y \w \cdot \x) \mid \x \cdot \w \in I]$ as in \cite{ChenKMY20}) as our certificate, which we can estimate via a single sample.

While reweighting by the inverse margin $|\vw \cdot \x|^{-1}$ gives a separating hyperplane certificate, it may be impossible to estimate this certificate from few samples, e.g., if the weight $|\vw \cdot \x|^{-1}$ is often very large, which introduces significant variance. 
For instance, even if $\Dx$ has margin with respect to a target $\ws$,
this may not hold with respect to the current hypothesis $\vec w$ (without further distributional assumptions on $\D_\x$), so $|\vw \cdot \x|$ can in principle be arbitrarily small.  To overcome this, we 
change our pointwise reweighting to be less 
aggressive and instead use $(|\vec w \cdot \x| + \gamma)^{-1}$.
In \Cref{lem:gradient_lb}, we exploit the margin assumption about $\Dx$ to show that when $\lzo(\w) \ge \eta + \eps$, it is still the case that
\[\E\Brack{\frac{ \nabla \ell_{\eta}(- y \vec w \cdot \x ) }{|\vec w \cdot \x| +\gamma} }\cdot( \vec w - \vw^\star)\geq \eps.\] Moreover, we still have an unbiased estimator for $\E[ \nabla \ell_{\eta}(- y \vec w \cdot \x )  (|\vec w \cdot \x| +\gamma)^{-1} ]$, and the estimator is bounded in Euclidean norm by $\gamma^{-1}$ with probability $1$ by our margin assumption.

Standard concentration inequalities now show $\tO(\gamma^{-2} \eps^{-2})$ samples suffice to obtain a separation oracle with high probability. 
Plugging this certificate into oracle-efficient cutting-plane methods (e.g., \cite{Vaidya:96}) implies an algorithm with sample complexity $\wt{O}(\gamma^{-4} \eps^{-2})$ (after a random projection process \cite{ArriagaVempala:99} to reduce to $\wt{O}(\gamma^{-2})$ dimensions).  This already improves
upon the prior best-known $\wt{O}(\gamma^{-4} \eps^{-3})$ sample complexity.  We further improve
our dependence on $\gamma$ by using it in a perceptron-like regret minimization scheme, where
at every step we update the current hypothesis $\vec w$ using the aforementioned bounded unbiased estimator of our certificate, see \Cref{lem:sgd}
and \Cref{alg:massart_margin}.  Overall, our algorithm iterates the following \emph{very simple update} for a step size $\lambda > 0$
and $\beta \defeq 1-2\eta$:
\begin{equation}\label{eq:perspectron}
\vec w^{(t+1)} \gets \vec w^{(t)} - \lambda (\beta \sgn(\vec w^{(t)} \cdot \x^{(t)}) - y^{(t)}) \frac{\x^{(t)}}{|\vec w^{(t)} \cdot \x^{(t)}| + \gamma}
~~~
\text{ with } 
~~~
\vec w^{(0)} \gets \vec 0 \,.
\end{equation}
Since at every step we perform an (approximate) perspective projection $(|\vec w \cdot \x| + \gamma)^{-1}$ on our sample, we call Algorithm~\ref{alg:massart_margin} which iterates the update in \Cref{eq:perspectron} the $\Perspectron$. 

\paragraph{Learning Massart GLMs.}
\label{sec:intro_massart_glm}
For learning Massart GLMs (\Cref{defn:massart_glm}), we use a similar certificate-based approach.
While it is simple to show reweighting with the inverse margin $|\vec w \cdot \x|^{-1}$ still works in this case (see \Cref{lem:glm_gradient_sketch}), using the bounded reweighting $(|\vec w \cdot \x| + \gamma)^{-1}$ does not. Instead we use a new reweighting of of the form $(|\vw\cdot \x|+\alpha\gamma)$ where $\alpha=O(\epsilon)$ (see \Cref{lem:separating-hyperplane-massart-glm}). Using a similar iterative method as the $\Perspectron$ defined in \eqref{eq:perspectron}, we obtain our sample complexity of $\tO(\gamma^{-2}\epsilon^{-4})$ for learning Massart GLMs. We leave as an open question whether the $\approx \eps^{-2}$ overhead in our GLM learner's sample complexity can be removed. We discuss some natural directions towards this goal in~\Cref{sec:glm_potential}, as well as potential barriers encountered in their implementation.

\subsection{Related work}
\label{sec:related_work}

We briefly survey some additional related works here.
First, a common
worst-case assumption used in statistical learning is that the label noise is adversarial (a.k.a.\ agnostic)
\cite{Haussler92}.  In that setting, significant progress
has been made for learning halfspaces when the underlying distribution satisfies structural assumptions  (e.g., it is Gaussian or log-concave) \cite{KKMS:05,KOS:08,AwasthiBHU15,DKS18a, DKTZ20AgnosticSGD,diakonikolas2022learning}.
For learning halfspaces with a margin, the best-known agnostic results have 
runtime and sample complexity that depend exponentially
on the margin $\gamma$ and/or the accuracy parameter $\eps$ 
\cite{SSS:09,LongServedio:11malicious,diakonikolas19marginopt}.  Another important line of work \cite{DKTZ20,DKKTZ20,zhang2021improved} has focused
on learning halfspaces under Tsybakov noise, which is 
a semi-random noise model that extends Massart noise, but is still easier than the agnostic setting. 
We also note that variants of \Cref{defn:massart_glm} have appeared before: the \emph{generalized Tsybakov low noise condition} of \cite{HKLM20} is a close relative which imposes different noise rates within and outside a margin, and the \emph{strong Massart noise} of \cite{ZhangLC17, DKTZ20} is an instance of \Cref{defn:massart_glm} without the margin restriction. %

Our algorithms rely on the certificate framework developed in \cite{DKTZ20b, ChenKMY20} and the Leaky-ReLU loss that has been
extensively used in prior works on learning with random classification and Massart label noise \cite{Bylander:98a,DiakonikolasGT19,ChenKMY20,dkt_forster}. Our main technical contribution is a new certificate 
that relies on an inverse-margin reweighting scheme and can be
estimated using a single sample.  
Similar, ``inverse-margin'' reweighting schemes have been used for learning general halfspaces \cite{ChenKMY20} and online linear classification \cite{DKTZonline}.  Those results have no implications
for the sample complexity of the problem studied here. Finally, we mention that a local reweighting scheme that is somewhat similar in spirit to ours (but very different in its implementation) was previously employed by \cite{KelnerLLST23}, for a different semi-random statistical learning problem.

\paragraph{Concurrent and independent work.} While preparing this manuscript, we were made aware of concurrent and independent work \cite{DiakonikolasZ24}, who gave essentially the same result as \Cref{theorem:halfspace-massart}, in terms of the sample and computational  complexity for learning Massart halfspaces with a margin.

\subsection{Limitations and open problems}

One interesting open direction is providing improved sample complexity guarantees for more general noise models. For example, the misspecified GLM framework (Definition 3.2, \cite{ChenKMY20}) generalizes \Cref{defn:massart_glm} to include an additional misspecification parameter $\zeta$ such that $\eta(\x) \le \frac{1-|\sigma(\vw^\star\cdot \x)|}{2} + \zeta$. Our approach does not directly apply in this setting, since $\zeta = 0$ is important for our separation oracle result of \Cref{lem:separating-hyperplane-massart-glm}.  
A different interesting generalization of the noise model corresponds to the case where the link function $\sigma$ is unknown, which the \cite{ChenKMY20} algorithm can handle (at a much higher sample complexity). While our algorithms require knowledge of $\sigma$, it would be interesting to explore whether techniques from learning single-index models (e.g., \cite{kalai2009isotron,gollakota2023agnostically, ZWDD24}) can be used to extend our algorithms in this setting.

A clear next step is to design efficient algorithms with sample complexities independent of the margin but still linear in the dimension $d$,\footnote{By standard random projection procedures \cite{ArriagaVempala:99}, the dimension $d$ is comparable to $\gamma^{-2}$ under a $\gamma$-margin assumption, and therefore our sample complexity is nearly-linear in the ``dimension.''} e.g., with an $\approx d \eps^{-2}$ sample complexity.  In prior work \cite{dkt_forster} such an algorithm is given, albeit with a significantly worse $\poly(d\eps^{-1})$ sample complexity.

\section{Preliminaries}
\label{sec:prelims}

We denote vectors in lower-case boldface, and $\norm{\x}$ is the Euclidean norm of $\x \in \R^d$. We use $\B^d$ to denote the unit ball in $\R^d$, i.e.\ $\B^d \defeq \{\x \in \R^d \mid \norm{\x} \le 1\}$. We use $\Proj_{\B^d}(\w) \defeq \min\{1, \frac 1 {\norm{\w}}\} \w$ to denote the Euclidean projection of a vector $\w \in \R^d$ onto $\B^d$. We let $\0_d$ be the all-zeroes vector in dimension $d$. We reserve the overline notation $\bar{\x}$ to denote the unit vector in the direction of $\x$, i.e.\ $\bar{\x} \defeq \frac{\x}{\norm{\x}}$. We let $\sign: \R \to \cube$ be defined so $\sign(t) = 1$ iff $t \ge 0$. We use $\1\{\event\}$ to denote the $0$-$1$ indicator of a random event $\event$, $\pr[\event]$ to denote its probability, and $\E$ to denote the expectation operator. The support of a distribution $D$ is denoted $\supp(D)$, and $[N] \defeq \{i \in \N \mid i \le N\}$.

Throughout the paper we study the problem of learning a binary classifier, given labeled examples from a distribution $D$ over labeled examples $(\x, y) \in \R^d \times \{0, 1\}$, under various models on the distribution to be discussed. We refer to the $\x$-marginal of $D$ by $D_{\x}$, and the conditional distribution of the label $y \mid \x$ by $D_{y}(\x)$. We will primarily be interested in learning halfspace hypotheses, which for $\w \in \R^d$ are the corresponding functions $\hw: \R^d \to \cube$ defined by $\hw(\x)\defeq \sign\Par{\ww\cdot \x}$.
We also denote the \emph{zero-one loss} of a halfspace hypothesis $\hw$ corresponding to $\w \in \R^d$ as follows, when the distribution $D$ over labeled examples is clear from context: $\lzo(\vw)\defeq\pr_{(\x, y) \sim D}\Brack{\hw(\x) \neq y}$. We define the Leaky-ReLU function with parameter $\lambda>0$ as 
\[\ell_{\lambda}(t)\defeq (1-\lambda)\max(0,t)-\lambda\max(0,-t) = \begin{cases}
(1 - \lam) t & t \ge 0 \\
\lam t & t \le 0
\end{cases}.\] 
Given $\vw,\x \in \R^d$ and $y\in \cube{}$, it holds that the (sub)gradient of $\ell_\lambda(-y\vw\cdot\x)$ with respect to $\vw$ is\footnote{We never use differentiability of the LeakyReLU in our proofs, and include this definition of the subgradient only to motivate our algorithms, e.g., in Lemma~\ref{claim:warmup_separating_hyperplane}. For simplicity, we abuse notation and denote the subgradient by $\nabla$.}
\begin{equation}\label{eq:lr_grad}\grad_{\vw}\ell_{\lambda}(-y\vw\cdot\x)=\frac{1}{2}((1-2\lambda)\sign(\vw\cdot\x)-y).\end{equation}
We also provide some brief remarks on how to extend Definition~\ref{defn:massart_half} to more general settings here.

\begin{remark}\label{rem:extensions}
\Cref{defn:massart_half}
extends straightforwardly to the case where $D_{\x}$ has margin $\gamma$ and is supported on a subset of $R \cdot \B^d$ for $R \neq 1$, by rescaling so $R \gets 1$ and $\gamma \gets \frac \gamma R$, as halfspace hypotheses and our label noise assumptions only depend on signs. Other than these margin and support assumptions, we make no additional distributional assumptions about the $\x$-marginal $D_{\x}$. 

Further, due to working in the distributional assumption-free setting, we can assume with up to constant factor loss (in margin) that the halfspace is homogeneous, i.e., has no constant shift term. That is, given a halfspace $\sign(\vw\cdot \x+b)$ with $\norm{\vw},|b|\leq 1$, after a feature expansion ($\x\mapsto \frac 1 {\sqrt 2}(\x,1)$) the halfspace $h_{\vw'}(\z)=\sign((\vw, b)\cdot \z)$ is homogeneous. Moreover, assuming that $|\vw \dot \x + b| \ge \gamma$ for all $\x \in \supp(D_{\x})$, i.e., the distribution has a margin with respect to the (biased) decision boundary, the feature-expanded distribution still has a margin $\geq \frac \gamma 2$ with respect to $\vw' = \frac 1 {\sqrt{1 + b^2}}(\vw,b)$. 

Finally, the Massart noise model of \cite{Massart2006} is 
defined for any hypothesis class and is not tied to halfspaces.  Since we focus on learning halfspaces with a margin, 
we combined the hypothesis class $\{h_{\w}\}_{\w \in \R^d}$ with the noise model in \Cref{defn:massart_half} for simplicity. 
\end{remark}

\section{Massart halfspaces}
\label{sec:halfspace}
In this section, we give our result on learning Massart halfspaces 
with margin. Our proof is rather surprisingly short, and we separate it into its two main components: a structural lemma in \Cref{sec:massart_sep_hyperplanes} which shows how to estimate a separating hyperplane with a single sample given a sub-optimal $\vw$, and a perceptron-like analysis of a stochastic iterative method in \Cref{ssec:sgd}. 

\subsection{Separating hyperplanes for Massart halfspaces}
\label{sec:massart_sep_hyperplanes}

We prove our main structural lemma here, used to argue the progress of our iterative method. As highlighted in \Cref{sec:intro_massart_overview}, we show that when the current $\lzo(\w) \ge \eta + \eps$, we show that we can construct an unbiased estimator for a separating hyperplane between $\w$ and the target vector $\ws$. 

\paragraph{Warmup: an ``unbounded'' separating hyperplane.}
Before presenting the full proof, we first give a separating hyperplane that works for any feature distribution --- even without margin assumptions. The proposed separating hyperplane works because we can express $\lzo$ in terms of a reweighted Leaky-ReLU (a convex function), as was previously observed by \cite{DiakonikolasGT19}. 

\begin{lemma}[Separating hyperplane for Massart halfspaces]
    \label{claim:warmup_separating_hyperplane}
  Let $D$ be an instance of the $\eta$-Massart halfspace model, and $\w \in \R^d$ have classification error $\lzo(\w)  \ge \eta + \eps$.
  It holds that
\[
\E_{(\x, y) \sim D}\Brack{  \frac{\grad_{\vw}\ell_{\eta}(-y\vw\cdot\x)}{|\vec w \cdot \x|} } \cdot (\vec w - \vw^\star)  \geq  \eps\,.
\]
\end{lemma}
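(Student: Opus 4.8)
The plan is to express the zero-one loss in terms of a reweighted Leaky-ReLU objective, and then exploit the convexity of the Leaky-ReLU together with the fact that $\ws$ has zero reweighted loss. First, I would observe the pointwise identity: for any $\vw$ and any labeled example $(\x,y)$, writing $t = \vw\cdot\x$, the quantity $\ell_\eta(-yt)/|t|$ evaluates to $1-\eta$ when $\sign(t)\neq y$ and to $\eta$ when $\sign(t)=y$ (using that $\ell_\eta(s) = (1-\eta)\max(0,s) - \eta\max(0,-s)$ and dividing through by $|t| = |{-yt}|$). Hence
\[
\E_{(\x,y)\sim D}\Brack{\frac{\ell_\eta(-y\vw\cdot\x)}{|\vw\cdot\x|}} = (1-\eta)\lzo(\vw) + \eta(1 - \lzo(\vw)) = \eta + (1-2\eta)\lzo(\vw).
\]
Applying the same identity to $\ws$ and using the Massart noise assumption $\pr[y\neq\hws(\x)\mid\x] = \eta(\x) \le \eta$, I get $\E[\ell_\eta(-y\ws\cdot\x)/|\ws\cdot\x|] = \eta + (1-2\eta)\E_{\x}[\eta(\x)] \le \eta + (1-2\eta)\eta \le \eta + \eta(1-2\eta)$; more usefully, combined with $\lzo(\vw)\ge\eta+\eps$, this yields
\[
\E\Brack{\frac{\ell_\eta(-y\vw\cdot\x)}{|\vw\cdot\x|}} - \E\Brack{\frac{\ell_\eta(-y\ws\cdot\x)}{|\ws\cdot\x|}} \ge (1-2\eta)\Par{\lzo(\vw) - \E_{\x}[\eta(\x)]} \ge (1-2\eta)\eps \ge \text{(something like)}\ \eps,
\]
where one should be slightly careful: to land exactly at $\eps$ one notes $\lzo(\vw) - \E_\x[\eta(\x)] \ge (\eta+\eps) - \eta = \eps$ and $(1-2\eta) \le 1$ points the wrong way, so I would instead absorb the noise-rate gap more carefully — tracking that $\E[\ell_\eta(-y\ws\cdot\x)/|\ws\cdot\x|]$ is a convex combination that is at most its value when $\lzo$ is replaced by $\eta$, and that the difference is exactly $(1-2\eta)(\lzo(\vw)-\E_\x[\eta(\x)]) \ge (1-2\eta)\eps$; the statement presumably tolerates the $(1-2\eta)$ factor or it is folded into the normalization. (If the clean bound $\ge\eps$ is wanted verbatim, the intended reading is that the reweighting inside the expectation already carries the $|\vw\cdot\x|^{-1}$ so the relevant comparison is between the two reweighted losses, and the gap is $\ge\eps$ up to this innocuous factor.)

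Next, the key step: by convexity of $s\mapsto\ell_\eta(s)$ and the chain rule, for each fixed $(\x,y)$,
\[
\ell_\eta(-y\ws\cdot\x) \ge \ell_\eta(-y\vw\cdot\x) + \grad_{\vw}\ell_\eta(-y\vw\cdot\x)\cdot(\ws - \vw),
\]
where $\grad_\vw\ell_\eta(-y\vw\cdot\x) = \tfrac12((1-2\eta)\sign(\vw\cdot\x) - y)\x$ as in \eqref{eq:lr_grad}. Dividing this inequality by the \emph{positive} scalar $|\vw\cdot\x|$ (crucially, we divide the first-order lower bound at $\vw$ by $|\vw\cdot\x|$, not by $|\ws\cdot\x|$ — this is exactly why the ``unbounded'' reweighting $|\vw\cdot\x|^{-1}$ appears and why no margin is needed) and taking expectations:
\[
\E\Brack{\frac{\ell_\eta(-y\ws\cdot\x)}{|\vw\cdot\x|}} \ge \E\Brack{\frac{\ell_\eta(-y\vw\cdot\x)}{|\vw\cdot\x|}} + \E\Brack{\frac{\grad_\vw\ell_\eta(-y\vw\cdot\x)}{|\vw\cdot\x|}}\cdot(\ws-\vw).
\]
Then I would lower-bound the left-hand side: here I must compare $\E[\ell_\eta(-y\ws\cdot\x)/|\vw\cdot\x|]$ against $\E[\ell_\eta(-y\ws\cdot\x)/|\ws\cdot\x|]$. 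Since $\ell_\eta(-y\ws\cdot\x)/|\ws\cdot\x| \in\{\eta, 1-\eta\}$ pointwise and $\ell_\eta$ is positively homogeneous, $\ell_\eta(-y\ws\cdot\x) = |\ws\cdot\x|\cdot(\text{that value})$, so $\ell_\eta(-y\ws\cdot\x)/|\vw\cdot\x| = \tfrac{|\ws\cdot\x|}{|\vw\cdot\x|}\cdot(\text{value in }\{\eta,1-\eta\})$ — this ratio is not obviously controlled, so the cleanest route is the direct one below rather than this substitution.

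A cleaner organization, which I would actually write: combine the two displays. Rearranging the convexity inequality gives
\[
\E\Brack{\frac{\grad_\vw\ell_\eta(-y\vw\cdot\x)}{|\vw\cdot\x|}}\cdot(\vw-\ws) \ge \E\Brack{\frac{\ell_\eta(-y\vw\cdot\x)}{|\vw\cdot\x|}} - \E\Brack{\frac{\ell_\eta(-y\ws\cdot\x)}{|\vw\cdot\x|}}.
\]
Now on the right I use the pointwise identity $\ell_\eta(-y\vw\cdot\x)/|\vw\cdot\x| = (1-\eta)\1\{\hw(\x)\neq y\} + \eta\1\{\hw(\x)=y\}$ for the first term, and for the second term I use $\ell_\eta(-y\ws\cdot\x) = (1-\eta)\max(0,-y\ws\cdot\x) + (\ldots)$ bounded above by noting $\ell_\eta(-y\ws\cdot\x)/|\vw\cdot\x| \le \ell_\eta(-y\ws\cdot\x)/|\ws\cdot\x|\cdot\sup(\ldots)$ — no. The right approach: bound the second expectation term \emph{from above} by $\eta\,\E[\,\cdot\,]$-type reasoning won't work because of the ratio. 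Instead, the correct and intended argument is surely: replace $\ws$ on the left of the convexity bound by $\ws' = \frac{|\vw\cdot\x|}{|\ws\cdot\x|}\ws$ pointwise? That breaks linearity in the dot product.

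Given the subtlety, the \textbf{main obstacle} I anticipate is exactly this mismatch of normalizers. I believe the clean resolution (and what the authors do) is: apply convexity not to $\ell_\eta$ directly but to observe that the map $\vw \mapsto \E[\ell_\eta(-y\vw\cdot\x)/|\vw\cdot\x|] = \eta + (1-2\eta)\lzo(\vw)$ — wait, that is \emph{not} convex in $\vw$. So instead: fix $\x$, and note $\ell_\eta(-y\vw\cdot\x)/|\vw\cdot\x|$ equals $g_\eta(\sign(\vw\cdot\x), y)$ where $g_\eta$ takes only values $\eta, 1-\eta$; this is a function of the sign only. The gradient $\grad_\vw\ell_\eta(-y\vw\cdot\x)/|\vw\cdot\x| = \frac{1}{2|\vw\cdot\x|}((1-2\eta)\sign(\vw\cdot\x)-y)\x$. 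Dotting with $(\vw-\ws)$: the $\vw$ part contributes $\frac{1}{2|\vw\cdot\x|}((1-2\eta)\sign(\vw\cdot\x)-y)(\vw\cdot\x) = \frac12((1-2\eta)|\vw\cdot\x| - y\,\vw\cdot\x)/|\vw\cdot\x| = \frac12((1-2\eta) - y\sign(\vw\cdot\x))$, which is $\eta$ if $\hw(\x)=y$ and $\eta - 1 + ... $— let me just say this equals $(1-\eta)\1\{\hw(\x)\neq y\} + ... $; anyway it recovers the pointwise loss. The $\ws$ part contributes $-\frac{1}{2|\vw\cdot\x|}((1-2\eta)\sign(\vw\cdot\x) - y)(\ws\cdot\x)$; I would bound this below using the Massart assumption and the fact that $\sign(\ws\cdot\x)$ is the ``true'' label, via a conditional-on-$\x$ computation of $\E[y\mid\x] = (1-2\eta(\x))\sign(\ws\cdot\x)$. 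This conditional expectation computation — substituting $\E[y\mid\x]$, using $\eta(\x)\le\eta$, and checking signs match up — is the real content, and is where I would spend the care. The end result after taking expectations over $\x$ is the claimed $\ge\eps$ bound (modulo the harmless $(1-2\eta)$ factor discussed above). I would present it as: (i) pointwise expand $\grad_\vw\ell_\eta(-y\vw\cdot\x)\cdot(\vw-\ws)/|\vw\cdot\x|$; (ii) take $\E[\cdot\mid\x]$ using $\E[y\mid\x]=(1-2\eta(\x))\sign(\ws\cdot\x)$; (iii) show the result is pointwise $\ge (1-2\eta)(\1\{\hw(\x)\neq\hws(\x)\} - \eta(\x)\cdot(\ldots))$ or directly $\ge$ something whose $\x$-expectation is $(1-2\eta)(\lzo(\vw) - \E_\x\eta(\x)) \ge (1-2\eta)\eps$; (iv) conclude.
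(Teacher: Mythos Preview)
Your proposal contains two genuine gaps.

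\textbf{First, a sign error in the pointwise identity.} You assert that $\ell_\eta(-yt)/|t|$ equals $\eta$ when $\sign(t)=y$. But when $\sign(t)=y$ we have $-yt<0$, so $\ell_\eta(-yt)=\eta\cdot(-yt)=-\eta|t|$, hence $\ell_\eta(-yt)/|t|=-\eta$, not $+\eta$. The correct pointwise identity is
\[
\frac{\ell_\eta(-y\vw\cdot\x)}{|\vw\cdot\x|}=(1-\eta)\1\{\hw(\x)\neq y\}-\eta\,\1\{\hw(\x)=y\}=\1\{\hw(\x)\neq y\}-\eta,
\]
and taking expectations gives $\E[\ell_\eta(-y\vw\cdot\x)/|\vw\cdot\x|]=\lzo(\vw)-\eta\ge\eps$ directly. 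The spurious $(1-2\eta)$ factor that you spend effort trying to absorb does not exist; the paper obtains $\ge\eps$ cleanly.

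\textbf{Second, you miss the one-line reason why the $\ws$ term is harmless.} After convexity and dividing by $|\vw\cdot\x|$ you correctly arrive at
\[
\E\Brack{\frac{\grad_\vw\ell_\eta(-y\vw\cdot\x)}{|\vw\cdot\x|}}\cdot(\vw-\ws)\ge\E\Brack{\frac{\ell_\eta(-y\vw\cdot\x)}{|\vw\cdot\x|}}-\E\Brack{\frac{\ell_\eta(-y\ws\cdot\x)}{|\vw\cdot\x|}},
\]
and then get tangled trying to compare the second term to $\E[\ell_\eta(-y\ws\cdot\x)/|\ws\cdot\x|]$. No comparison is needed. Condition on $\x$: by the same identity applied to $\ws$,
\[
\E_{y\sim D_y(\x)}\bigl[\ell_\eta(-y\ws\cdot\x)\bigr]=(\eta(\x)-\eta)\,|\ws\cdot\x|\le 0,
\]
since $\eta(\x)\le\eta$. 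Dividing this nonpositive quantity by the positive constant $|\vw\cdot\x|$ (fixed once $\x$ is fixed) keeps it nonpositive, so $\E[\ell_\eta(-y\ws\cdot\x)/|\vw\cdot\x|]\le 0$. This is exactly the paper's argument, and it finishes the proof in one line after the convexity step.
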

\begin{proof}
 We recall Claim~2.1 from \cite{DiakonikolasGT19}, which states that for all $\lam \ge 0$ and $\vw,\x$, it holds that 
 \begin{equation}\label{eq:reweight_lr}\E_{y\sim D_y(\x)}\Brack{\ell_{\lambda}(-y\vw\cdot\x)}=\Par{\pr_{y\sim D_y(\x)}\Brack{\sign(\vw\cdot \x)\neq y}-\lambda}\cdot|\vw\cdot \x|.\end{equation}
 In particular, we have by assumption that
 \begin{equation}\label{eq:signal_warmup}\E_{(\x,y)\sim D}\Brack{\frac{\ell_{\eta}(-y\vw\cdot\x)}{|\vw\cdot\x|}}=\lzo(\vw)-\eta \ge \eps.\end{equation}
 From the convexity of $\ell_{\eta}(-y\vw\cdot\x)$ in $\ww$, we obtain that $\grad_{\vw}\ell_{\eta}(-y\vw\cdot\x)\cdot(\vw-\vw^\star)\geq \ell_{\eta}(-y\vw\cdot\x)-\ell_{\eta}(-y\vw^\star \cdot \x)$. By dividing both sides by $|\vw\cdot\x|$ and taking expectation over $\x$ and $y$, we obtain
\begin{align*}
\E_{(\x,y) \sim D}\Brack{\frac{\grad_{\vw}\ell_{\eta}(-y\vw\cdot\x)} {|\vw\cdot \x|}}\cdot(\vw-\vw^\star)&\geq  \E_{(\x,y) \sim D}\Brack{\frac{\ell_\eta(-y\vw\cdot\x)}{|\vw\cdot \x|}}-  \E_{(\x,y) \sim D}\Brack{\frac{\ell_\eta(-y\ws\cdot\x)}{|\vw\cdot \x|}} \\
&\geq \E_{(\x,y) \sim D}\Brack{\frac{\ell_\eta(-y\vw\cdot\x)}{|\vw\cdot \x|}} \ge \epsilon \,,
\end{align*}
where the last inequality used \Cref{eq:signal_warmup}, and the second-to-last used \Cref{eq:reweight_lr} which implies 
\[\E_{(\x,y) \sim D}\Brack{\ell_\eta(-y\ws\cdot\x)} =\E_{\x \sim \Dx}\Brack{ \Par{\eta(\x) - \eta} \cdot |\vw \cdot \x|} \le 0.\]
\end{proof}

\paragraph{A ``bounded'' separating hyperplane for $\gamma$-margin Massart halfspaces.}
Our claim in the previous lemma was very general: it works for any marginal distribution. However, as discussed in \Cref{sec:intro_massart_overview}, the unbounded nature of this separating hyperplane may make it impossible to estimate from samples. To overcome this, we propose a new candidate hyperplane: \[\E_{\x,y}\Brack{\frac{\grad_{\vw}\ell_{\eta}(-y\vw\cdot\x)}{|\vw\cdot\x|+\gamma}}.\]  
We prove that this is indeed a separating hyperplane by leveraging the margin $\gamma$ with respect to the optimal halfspace $\ws$. Recall from \Cref{eq:lr_grad} that $\grad_{\vw}\ell_\eta(-y\vw\cdot\x)=\frac{1}{2}((1-2\eta)\sign(\vw\cdot\x)-y)$.
\begin{lemma}[Bounded separating hyperplane for Massart halfspaces]
\label{lem:gradient_lb}
Let $D$ be an instance of the $\eta$-Massart halfspace model with margin $\gamma$ and define $\beta = 1-2 \eta$. 
If $\w \in \R^d$ has $\lzo(\w)  \ge \eta + \eps$, it holds that
\[
\E_{(\x, y) \sim D}\Big[ (\beta \sgn(\vec w \cdot \x) - y) \frac{\x}{|\vec w \cdot \x| + \gamma} \Big] \cdot (\vec w - \vw^\star)  \geq 2 \eps\,. 
\]
\end{lemma}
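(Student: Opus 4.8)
The plan is to prove the equivalent inequality obtained by pulling the inner product with $\vw-\vw^\star$ inside the expectation: $\E_{(\x,y)\sim D}\big[(\beta\sgn(\vw\cdot\x)-y)\,\frac{\x\cdot(\vw-\vw^\star)}{|\vw\cdot\x|+\gamma}\big]\ge 2\eps$. First I would condition on $\x$. Given $\x$, the factor $\frac{\x\cdot(\vw-\vw^\star)}{|\vw\cdot\x|+\gamma}$ is deterministic, the only randomness in the summand is $y$, and $\E[y\mid\x]=(1-2\eta(\x))\,\hws(\x)$ by the definition of the Massart model. Hence the conditional expectation of the summand equals $\frac{(\beta\sgn(\vw\cdot\x)-(1-2\eta(\x))\hws(\x))\,(\x\cdot(\vw-\vw^\star))}{|\vw\cdot\x|+\gamma}$. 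Writing $\x\cdot\vw=\sgn(\vw\cdot\x)\,|\vw\cdot\x|$ and $\x\cdot\vw^\star=\hws(\x)\,|\vw^\star\cdot\x|$ and splitting into the two sign patterns, a direct computation gives that this conditional expectation equals $\frac{2(\eta(\x)-\eta)\,(|\vw\cdot\x|-|\vw^\star\cdot\x|)}{|\vw\cdot\x|+\gamma}$ when $\sgn(\vw\cdot\x)=\hws(\x)$, and $\frac{2(1-\eta-\eta(\x))\,(|\vw\cdot\x|+|\vw^\star\cdot\x|)}{|\vw\cdot\x|+\gamma}$ when $\sgn(\vw\cdot\x)\ne\hws(\x)$.

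The key point is that this computation must be carried out \emph{exactly}, and not replaced by the convexity bound $\grad_\vw\ell_\eta(-y\vw\cdot\x)\cdot(\vw-\vw^\star)\ge\ell_\eta(-y\vw\cdot\x)-\ell_\eta(-y\vw^\star\cdot\x)$ used in the warmup \Cref{claim:warmup_separating_hyperplane}: that bound discards a contribution of order $\beta\,|\vw^\star\cdot\x|$ on the disagreement region, and it is exactly this contribution that the margin lets us exploit. With the exact expressions in hand, I would lower-bound each conditional expectation pointwise. On the disagreement region, the margin assumption gives $|\vw^\star\cdot\x|\ge\gamma$, so $\frac{|\vw\cdot\x|+|\vw^\star\cdot\x|}{|\vw\cdot\x|+\gamma}\ge1$; since moreover $\eta,\eta(\x)\le\tfrac12$ implies $1-\eta-\eta(\x)\ge0$, the conditional expectation is $\ge 2(1-\eta-\eta(\x))$. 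On the agreement region, $\frac{|\vw\cdot\x|-|\vw^\star\cdot\x|}{|\vw\cdot\x|+\gamma}\le1$ while $\eta(\x)-\eta\le0$, so the conditional expectation is $\ge 2(\eta(\x)-\eta)$.

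Finally I would take the expectation over $\x$. Writing $\1_{\mathrm{dis}}\defeq\1\{\sgn(\vw\cdot\x)\ne\hws(\x)\}$, the two pointwise bounds combine to $\E_{(\x,y)\sim D}\big[(\beta\sgn(\vw\cdot\x)-y)\,\frac{\x\cdot(\vw-\vw^\star)}{|\vw\cdot\x|+\gamma}\big]\ge 2\,\E_{\x}\big[(\eta(\x)-\eta)+\1_{\mathrm{dis}}(1-2\eta(\x))\big]$. Now $\hw$ misclassifies a point with conditional probability $\eta(\x)$ on the agreement region and $1-\eta(\x)$ on the disagreement region, so $\lzo(\vw)=\E_{\x}[\eta(\x)]+\E_{\x}[\1_{\mathrm{dis}}(1-2\eta(\x))]$; combined with $\E_{\x}[\eta(\x)]=\lzo(\vw^\star)\le\eta$ (the Massart noise bound), the bracketed quantity equals $\lzo(\vw)-\eta\ge\eps$ by hypothesis, which completes the proof. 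The one real obstacle is the observation above — keeping the conditional expectation over $y$ exact, so that the disagreement-region term surfaces as a positive multiple of $|\vw\cdot\x|+|\vw^\star\cdot\x|\ge|\vw\cdot\x|+\gamma$; the remaining steps are routine algebra.
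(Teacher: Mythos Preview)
Your proposal is correct and follows essentially the same argument as the paper's proof: condition on $\x$ to replace $y$ by $\beta(\x)\hws(\x)$, split into agreement and disagreement regions, bound each piece pointwise (using $\beta-\beta(\x)\le 0$ on the agreement region and the margin $|\vw^\star\cdot\x|\ge\gamma$ on the disagreement region), and then identify the resulting lower bound with $2(\lzo(\vw)-\eta)$. The only cosmetic differences are that you parameterize via $\eta,\eta(\x)$ rather than $\beta,\beta(\x)$, and your final sentence needlessly invokes $\E_\x[\eta(\x)]\le\eta$ --- the bracketed quantity equals $\lzo(\vw)-\eta$ exactly, so no inequality is required there.
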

\begin{proof}
We first observe that by the definition of the Massart halfspace  model, 
$\E_{y \sim D_y(\x)}[y] = (1-2 \eta(\x)) \sgn(\vw^\star \cdot \x) = \beta(\x)\sign(\ws \cdot \x)$,
where $\beta(\x) \defeq 1 - 2 \eta(\x)$. 
Therefore, we have that 
\begin{align*}
I &\defeq 
\E_{(\x, y) \sim D}\Big[ (\beta \sgn(\vec w \cdot \x) - y) \frac{
(\vec w \cdot \x - \vw^\star \cdot \x)}{|\vec w \cdot \x| + \gamma} \Big] 
\\
&=
\E_{\x \sim D_\x}\Big[ (\beta \sgn(\vec w \cdot \x) - \beta(\x)   \sgn(\vw^\star \cdot \x)) \frac{
(\vec w \cdot \x - \vw^\star \cdot \x)}{|\vec w \cdot \x| + \gamma} \Big]  \,.
\end{align*}
We denote by $g(\x) \defeq
(\beta \sgn(\vec w \cdot \x) - \beta(\x)  \sgn(\vw^\star \cdot \x)) \frac{
(\vec w \cdot \x - \vw^\star \cdot \x)}{|\vec w \cdot \x| + \gamma} $, which we bound differently based on whether $\x$ falls in the agreement region
$A \defeq \Brace{\x \in \B^d \mid \hws(\x) = \hw(\x)}$.
For $\x \in A$,
\begin{align*}
    g(\x)&=\big(\beta\sign(\vw\cdot \x)-\beta(\x)\sign(\vw^\star\cdot \x)\big)\frac{(\vw\cdot \x-\vw^\star\cdot \x)}{|\vw\cdot \x|+\gamma}\\
    &=\big(\beta-\beta(\x)\big)\frac{|\vw\cdot\x|-|\vw^\star\cdot \x|}{|\vw\cdot \x|+\gamma}\geq \beta-\beta(\x)\,.
\end{align*}
The second equality follows from the fact that $\sign(\vw^\star\cdot \x)=\sign(\vw\cdot \x)$. The final inequality holds since $\beta-\beta(\x)\leq 0$ and $\frac{|\vw\cdot\x|-|\vw^\star\cdot\x|}{|\vw\cdot \x|+\gamma}\leq 1$. Similarly, for $\x\notin A$, an analogous calculation yields
\(g(\x)=\big(\beta+\beta(\x)\big)\frac{|\vw\cdot\x|+|\vw^\star\cdot \x|}{|\vw\cdot \x|+\gamma}\geq \beta+\beta(\x)\,.\)
The first equality holds because $\sign(\vw^\star\cdot \x)\neq \sign(\vw\cdot \x)$ and the final inequality follows since $|\vw^\star\cdot \x|\geq \gamma$ from the margin assumption. Thus 
\begin{align}
I\geq \E_{\x\sim \D_\x}\big[\1\{\x\in A\}(\beta-\beta(\x))\big]+\E_{\x\sim \D_\x}\big[\1\{\x\notin A\}(\beta+\beta(\x))\big].
\label{eq:total-contribution-lower-bound}
\end{align}

We will now use our lower bound on $\lzo(\w)$, which we relate to \Cref{eq:total-contribution-lower-bound}. We have
$ \lzo(\w) = \E_{\x \sim \Dx}\Brack{\1\Brace{\x \in A}\eta(\x)} + \E_{\x \sim \Dx}\Brack{\1\Brace{\x \not\in A}(1 - \eta(\x))} 
= \E_{\x \sim \Dx}\Brack{\1\Brace{\x \not\in A}\beta(\x)} + \E_{x \sim \Dx}\Brack{\eta(\x)}$.
Next, by our definition $\beta(\x) = 1 - 2\eta(\x)$, rearranging and expanding we have:
\begin{equation}\label{eq:optimality-gap-equivalent-form}
\begin{aligned}
\lzo(\w)- \eta &=
\E_{\x \sim \D_\x}[\1\{\x \notin A\} \beta(\x)] +
\frac{1}{2}\E_{\x \sim \D_\x}[\beta-\beta(\x)] 
\\
&= 
\E_{\x \sim \D_\x}[\1\{\x \notin A\} \beta(\x)]
+
\frac{1}{2}\E_{\x \sim \D_\x}[ (\1\{\x \in A\} + \1\{\x \notin A\}) (\beta- \beta(\x))] 
\\
&=
\frac{1}{2}
\E_{\x \sim \D_\x}[\1\{\x \notin A\} (\beta(\x) + \beta)]
+
\frac{1}{2}\E_{\x \sim \D_\x}[ \1\{\x \in A\} (\beta - \beta(\x))] 
\,.
\end{aligned}
\end{equation}

We finish the proof by combining \Cref{eq:total-contribution-lower-bound}, \Cref{eq:optimality-gap-equivalent-form}, and
$\lzo(\w) - \eta \geq \eps $.
\end{proof}
\subsection{{Perspectron}}\label{ssec:sgd}

We now present and analyze $\mathsf{Perspectron}$, our algorithm for learning Massart halfspaces.

\begin{algorithm2e}[H]
	\caption{$\mathsf{Perspectron}$}
	\label{alg:massart_margin}
	\DontPrintSemicolon
		\codeInput $\{\x^i, y^i\}_{i \in [T_1 + T_2]} \subset \R^d \times \cube$ drawn i.i.d.\ from $D$ in the $\eta$-Massart halfspace model with margin $\gamma$, step size $\lam > 0$, failure probability $\delta \in (0, \half)$\;
        $\beta \gets 1 - 2\eta$, $N \gets \lceil \log_2(\frac 2 \delta) \rceil $, $T \gets \lceil \frac{T_1} N\rceil$\;
        $H \gets \emptyset$\;
        \For{$j \in [N]$}{
        $\w^{1, j} \gets \0_d$\;
        \For{$t \in [\min(T, T_1 - (j - 1)T)]$}{\label{line:oneloop_start}
        $i \gets (j - 1)T + t$\;
        $\w^{t + 1, j} \gets\w^{t, j} - \lam \frac{\beta \sign(\w^{t, j} \cdot \x^{i}) - y^{i}}{|\w^{t, j} \cdot \x^i| + \gamma} \x^i$
        }\label{line:oneloop_end}
        $H \gets H \cup \{\w^{t, j}\}_{t \in [\min(T, T_1 - (j - 1)T)]}$\;
        }
        $S \gets \{\x^i, y^i\}_{i = T_1 + 1}^{T_1 + T_2}$\;
        $\w \gets \arg\min_{\w \in H} \pr_{(\x, y) \simu S}[h_{\w}(\x) \neq y]$\;\label{line:selection}
        \codeReturn $h_{\w}$\;
\end{algorithm2e}
We begin by giving a self-contained analysis of a single loop $j \in [N]$ of \Cref{line:oneloop_start} to \Cref{line:oneloop_end}, showing that for sufficiently large $T$, at least one iterate achieves small $\lzo$ with constant probability.

\begin{lemma}\label{lem:sgd}
Let $\{\x^i, y^i\}_{i \in [T]} \sim_{i.i.d.} D$, where $D$ is an instance of the $\eta$-Massart halfspace model. Consider iterating, from $\w^1 \defeq \0_d$,
\begin{equation}\label{eq:halfspace_iter}\w^{t + 1} \gets \w^t - \lam \frac{\beta \sign(\w^t \cdot \x^t) - y^t}{|\w^t \cdot \x^t| + \gamma}\x^t,\end{equation}
for $\beta \defeq 1 - 2\eta$, $\lam \defeq \frac{\gamma}{2\sqrt{T}}$. Then if $T \ge \frac{16}{\eps^2 \gamma^2}$,
$\pr[\min_{t \in [T]} \lzo(\w^t) \ge \eta + \frac \eps 2] \le \half$.
\end{lemma}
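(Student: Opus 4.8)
The plan is to run a perceptron / online-gradient-descent style potential argument, with a stopping-time device layered on top to handle the fact that ``$\w^t$ is suboptimal'' is a data-dependent event rather than something controllable pointwise. Write $\vg^t \defeq \frac{\beta\sign(\w^t\cdot\x^t) - y^t}{|\w^t\cdot\x^t|+\gamma}\x^t$, so \eqref{eq:halfspace_iter} reads $\w^{t+1} = \w^t - \lam\vg^t$. First I would record two deterministic facts. Since $\beta = 1-2\eta \in [0,1]$, $y^t\in\cube$, $|\w^t\cdot\x^t|+\gamma\ge\gamma$, and $\norm{\x^t}\le 1$ (as $\supp(\Dx)\subseteq\B^d$), we have $\norm{\vg^t}\le\frac 2\gamma$. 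Also, the exact identity $\tfrac12\norm{\w^{t+1}-\ws}^2 = \tfrac12\norm{\w^t - \ws}^2 - \lam\,\vg^t\cdot(\w^t-\ws) + \tfrac{\lam^2}2\norm{\vg^t}^2$ telescopes: summing over $t\in[m]$ for any $m\le T$, using $\norm{\w^1-\ws} = \norm{\ws}=1$ and nonnegativity of the final potential, and plugging in $\norm{\vg^t}\le\frac 2\gamma$ and $\lam=\frac{\gamma}{2\sqrt T}$, yields the deterministic bound $\sum_{t=1}^m \vg^t\cdot(\w^t-\ws)\le\frac{1}{2\lam}+\frac{2\lam m}{\gamma^2}\le\frac{2\sqrt T}{\gamma}$.

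Next I would set up the probabilistic piece. Let $\mathcal F^{t}\defeq\sigma(\x^1,y^1,\dots,\x^t,y^t)$. Since $\w^t$ depends only on samples $1,\dots,t-1$, it is $\mathcal F^{t-1}$-measurable and $(\x^t,y^t)$ is independent of $\mathcal F^{t-1}$, so $\E[\vg^t\mid\mathcal F^{t-1}]$ is exactly the vector appearing in \Cref{lem:gradient_lb} evaluated at $\w=\w^t$. Hence, on the event $\{\lzo(\w^t)\ge\eta+\tfrac\eps2\}$, \Cref{lem:gradient_lb} (with $\tfrac\eps2$ in place of $\eps$) gives $\E[\vg^t\cdot(\w^t-\ws)\mid\mathcal F^{t-1}]\ge\eps$. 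I would then define the stopping time $\tau$ to be the least $t\in[T]$ with $\lzo(\w^t)<\eta+\tfrac\eps2$, and $\tau\defeq T+1$ otherwise; note $\{\tau>t\}\in\mathcal F^{t-1}$, and the target event is $E\defeq\{\tau=T+1\}=\{\min_{t\in[T]}\lzo(\w^t)\ge\eta+\tfrac\eps2\}$. Applying the telescoped identity along the stopped path — up to index $\sigma-1\le T$ where $\sigma\defeq\min(\tau,T+1)$ — gives, deterministically, $\sum_{t=1}^{\sigma-1}\vg^t\cdot(\w^t-\ws)\le\frac{2\sqrt T}{\gamma}$.

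Finally I would take expectations. Because $\Ind\{\tau>t\}$ is $\mathcal F^{t-1}$-measurable and is $1$ only when $\w^t$ is suboptimal, $\E[\Ind\{\tau>t\}\,\vg^t\cdot(\w^t-\ws)] = \E[\Ind\{\tau>t\}\,\E(\vg^t\cdot(\w^t-\ws)\mid\mathcal F^{t-1})]\ge\eps\,\pr[\tau>t]$; summing over $t\in[T]$ and using $\sum_{t=1}^T\Ind\{\tau>t\}=\sigma-1\ge T\cdot\Ind\{E\}$ gives $\E[\sum_{t=1}^{\sigma-1}\vg^t\cdot(\w^t-\ws)]\ge\eps T\,\pr[E]$. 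Combining with the deterministic upper bound, $\eps T\,\pr[E]\le\frac{2\sqrt T}{\gamma}$, i.e.\ $\pr[E]\le\frac{2}{\eps\gamma\sqrt T}$, which is $\le\frac12$ once $T\ge\frac{16}{\eps^2\gamma^2}$ — exactly the claim. The main obstacle I anticipate is precisely this adaptivity: one cannot apply \Cref{lem:gradient_lb} termwise since whether $\w^t$ is bad is random, so the bookkeeping must be routed through a predictable stopping time, and one has to double-check $\{\tau>t\}$ is $\mathcal F^{t-1}$-measurable so the stopped telescoping identity is still valid; once that is in place, the rest is routine.
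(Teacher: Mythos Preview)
Your proof is correct and takes essentially the same approach as the paper. The paper packages the argument via the indicator-weighted potential $\Phi_t \defeq \E[\1\{\event_t\}\norm{\ws-\w^t}^2]$ with $\event_t$ the event that all iterates through $t$ are bad, which is exactly $\{\tau>t\}$ in your notation; your stopping-time telescoping and their recursive potential inequality unwind to the same bound $\pr[E]\le \frac{2}{\eps\gamma\sqrt T}$.
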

 \begin{proof}
 Throughout the proof, say $\w \in \R^d$ is \emph{bad} iff $\lzo(\w) \ge \eta + \frac \eps 2$, and let $\event_t$ denote the event that all of the iterates $\{\w^s\}_{s \in [t]}$ updated according to \Cref{eq:halfspace_iter} are bad. 
 
 Define the potential function
\(\Phi_t \defeq \E[\1\Brace{\event_t}\cdot\norm{\ws - \w^t}^2] \text{ for all } t \in [T].\)
On expanding the expression for the squared norm and using the fact that $\1\Brace{\event_{t + 1}} \le \1\Brace{\event_{t}}$,
\begin{align*}
\Phi_{t + 1} &\le \E\Brack{\1\Brace{\event_t} \cdot \norm{\ws - \w^{t + 1}}^2} \\
&\le \Phi_t + \lam^2 \E\Brack{\norm{ \frac{\beta\sign(\w^t \cdot \x^t) - y^t}{|\w^t \cdot \x^t| + \gamma} \x^t}^2} \\
&- 2\lam\E\Brack{\1\Brace{\event_t} \cdot \frac{\beta\sign(\w^t \cdot \x^t) - y^t}{|\w^t \cdot \x^t| + \gamma}\x\cdot({\w^t - \ws})} \\
&\le \Phi_t + \frac{4\lam^2}{\gamma^2} - 2\lam \pr\Brack{\event_t} \E\Brack{\frac{\beta\sign(\w^t \cdot \x^t) - y^t}{|\w^t \cdot \x^t| + \gamma}{\x^t}\cdot({\w^t - \ws}) \mid \event_t} \\
&\le \Phi_t + \frac{4\lam^2}{\gamma^2} - 2\lam\eps \pr\Brack{\event_t}.
\end{align*}
Here, the third inequality used $\x^t \in \B^d$ and $|\beta\sign(\w^t \cdot \x^t) - y^t| \le 2$, and the fourth applied \Cref{lem:gradient_lb}. Now using that $\Phi_1 \le \norm{\ws}^2 = 1$, $\Phi_{T + 1} \ge 0$, and $\Pr[\event_t] \ge \Pr[\event_T]$ for $t \in [T]$, we have 
\[2\lam\eps T \Pr\Brack{\event_T} \le 1 + \frac{4\lam^2 T}{\gamma^2}.\]
The conclusion $\Pr\Brack{\event_T} \le \half$ follows from our choices of $\lam$, $T$.
\end{proof}

We next analyze the hypothesis selection step in \Cref{line:selection}.
\begin{lemma}[Hypothesis selection]\label{lem:selection}
Suppose there exists $\hat{\w} \in H$ with $\lzo(\w) \le \eta + \frac \eps 2$. Then if $T_2 \ge \frac 8{\eps^2}\log(\frac{2|H|}{\delta})$, with probability $\ge 1 - \delta$ the $\w$ returned by \Cref{line:selection} satisfies $\lzo(\w) \le \eta + \eps$.
\end{lemma}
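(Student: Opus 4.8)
The plan is a textbook hypothesis-selection argument: estimate $\lzo(\w)$ for each candidate $\w \in H$ on the fresh holdout sample $S$, and argue that uniform convergence of these estimates forces the selected hypothesis to be almost as good as $\hat{\w}$. First I would note that $S = \{\x^i, y^i\}_{i = T_1 + 1}^{T_1 + T_2}$ is drawn i.i.d.\ from $D$ and is independent of $H$, since $H$ is a deterministic function of only the first $T_1$ samples $\{\x^i, y^i\}_{i \in [T_1]}$ (this is precisely why \Cref{alg:massart_margin} reserves a separate block of $T_2$ samples for \Cref{line:selection}). Consequently, for each \emph{fixed} $\w \in H$, the empirical loss $\wh{L}(\w) \defeq \pr_{(\x, y) \simu S}[h_\w(\x) \neq y]$ is an average of $T_2$ i.i.d.\ Bernoulli random variables each with mean $\lzo(\w)$.

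Next I would apply Hoeffding's inequality to get, for each fixed $\w \in H$, that $\pr[\,|\wh{L}(\w) - \lzo(\w)| \ge \tfrac{\eps}{4}\,] \le 2\exp(-\tfrac{T_2 \eps^2}{8})$, and then take a union bound over the (at most $|H|$) hypotheses in $H$. With the stated choice $T_2 \ge \tfrac{8}{\eps^2}\log(\tfrac{2|H|}{\delta})$, this shows that with probability at least $1 - \delta$ the event $\event \defeq \{\,|\wh{L}(\w) - \lzo(\w)| < \tfrac{\eps}{4} \text{ for all } \w \in H\,\}$ holds.

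Finally, condition on $\event$. Let $\w$ be the hypothesis returned by \Cref{line:selection}, so by definition of the $\argmin$ we have $\wh{L}(\w) \le \wh{L}(\hat{\w})$. Chaining inequalities then gives $\lzo(\w) \le \wh{L}(\w) + \tfrac{\eps}{4} \le \wh{L}(\hat{\w}) + \tfrac{\eps}{4} \le \lzo(\hat{\w}) + \tfrac{\eps}{2} \le \eta + \eps$, where the last step uses the hypothesized bound $\lzo(\hat{\w}) \le \eta + \tfrac{\eps}{2}$; note the constant in $T_2$ is chosen exactly so that the two estimation errors ($\tfrac{\eps}{4}$ each) plus the slack of $\hat{\w}$ ($\tfrac{\eps}{2}$) sum to $\eps$.

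I do not anticipate a genuine obstacle here: the only point requiring care is the independence of the holdout set $S$ from the candidate set $H$, which is built into the sample-splitting structure of the algorithm; beyond that the argument is a direct application of Hoeffding's inequality and a union bound over the finite set $H$.
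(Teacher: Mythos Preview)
Your proposal is correct and matches the paper's own proof essentially line for line: both invoke independence of $S$ from $H$, apply Hoeffding's inequality with deviation $\tfrac{\eps}{4}$, union bound over $H$, and then chain $\lzo(\w) \le \wh L(\w) + \tfrac{\eps}{4} \le \wh L(\hat{\w}) + \tfrac{\eps}{4} \le \lzo(\hat{\w}) + \tfrac{\eps}{2} \le \eta + \eps$.
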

\begin{proof}
Because $S$ is independent of $H$, Hoeffding's inequality and a union bound implies that $\Abs{\pr_{(\x, y) \simu S}[h_{\w}(\x) \neq y] - \lzo(\w)} \le \frac \eps 4 \text{ with probability } \ge 1 - \delta$,
for all $\w \in H$. Conditioning on this event, we have the claim from:
\begin{align*}
\lzo(\w) - \frac \eps 4 &\le \pr_{(\x, y) \simu S}[h_{\w}(\x) \neq y] 
\le \pr_{(\x, y) \simu S}[h_{\hat{\w}}(\x) \neq y] \le \lzo(\hat{\w}) + \frac \eps 4 \le \eta + \frac{3\eps}{4}.\qedhere
\end{align*}
\end{proof}

We are now ready to state and prove our main theorem. 
\begin{theorem}[Learning Massart halfspaces with margin]\label{theorem:halfspace-massart}
Let $D$ be an instance of the $\eta$-Massart halfspace model with margin $\gamma$, and let $\eps, \delta \in (0, 1)$. \Cref{alg:massart_margin} with $T_1 \ge \frac{16}{\eps^2\gamma^2}\lceil\log_2(\frac 2 \delta) \rceil$, $T_2 \ge \frac{8}{\eps^2}\log(\frac{4|T_1|}{\delta})$ returns $\w$ such that $\lzo(\w) \le \eta + \eps$ with probability $\ge 1 - \delta$, using
\begin{align*}
O\Par{\frac{\log \frac 1 \delta}{\eps^2 \gamma^2} + \frac{\log \frac{1}{\eps\gamma\delta}}{\eps^2}}\text{ samples and } O\Par{\frac{d\log(\frac 1 \delta)\log(\frac 1 {\eps\gamma\delta})}{\eps^4\gamma^2}} \text{ time.}
\end{align*}
\end{theorem}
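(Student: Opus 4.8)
The plan is to assemble \Cref{theorem:halfspace-massart} from the three ingredients already established: the separating-hyperplane bound \Cref{lem:gradient_lb}, the single-loop progress bound \Cref{lem:sgd}, and the hypothesis-selection bound \Cref{lem:selection}. First I would instantiate \Cref{alg:massart_margin} with step size $\lam = \frac{\gamma}{2\sqrt{T}}$ for $T = \lceil T_1/N \rceil$ and $N = \lceil \log_2(\tfrac 2 \delta)\rceil$, exactly as \Cref{lem:sgd} demands, so that each outer iteration $j \in [N]$ is a genuine run of the stochastic iteration \eqref{eq:halfspace_iter} on a fresh batch of i.i.d.\ samples. The hypothesis $T_1 \ge \frac{16}{\eps^2\gamma^2}\lceil \log_2(\tfrac 2 \delta)\rceil = \frac{16}{\eps^2\gamma^2} N$ forces $T = \lceil T_1/N\rceil \ge \frac{16}{\eps^2\gamma^2}$, which is precisely the length requirement of \Cref{lem:sgd}; the index bookkeeping $i = (j-1)T + t$ ensures the loops consume disjoint blocks of the first $T_1$ samples, hence are mutually independent.

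Given this setup, correctness is a two-step argument. By \Cref{lem:sgd} each (full) loop independently fails to place an iterate with $\lzo \le \eta + \tfrac \eps 2$ into $H$ with probability at most $\tfrac 12$, so by independence the probability that $H$ contains no such iterate is at most $\tfrac \delta 2$ by the choice of $N$ (absorbing the harmless rounding incurred when the final block of samples is shorter than $T$). Condition on the complementary event, so some $\hat\w \in H$ has $\lzo(\hat\w) \le \eta + \tfrac \eps 2$. Since $|H| \le T_1$ and the held-out set $S$ has size $T_2 \ge \frac{8}{\eps^2}\log(\tfrac{4 T_1}{\delta}) \ge \frac{8}{\eps^2}\log(\tfrac{2|H|}{\delta/2})$ and is independent of $H$, \Cref{lem:selection} (invoked with failure probability $\tfrac \delta 2$) guarantees that the returned $\w$ satisfies $\lzo(\w) \le \eta + \eps$ with probability at least $1 - \tfrac \delta 2$. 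A union bound over the two failure events yields success probability at least $1 - \delta$.

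It remains to tally resources. The total number of samples is $T_1 + T_2$; substituting $T_1 = \Theta\!\big(\frac{\log(1/\delta)}{\eps^2\gamma^2}\big)$ (using $N = \Theta(\log \tfrac 1 \delta)$) and $T_2 = \Theta\!\big(\frac{\log(T_1/\delta)}{\eps^2}\big) = \Theta\!\big(\frac{\log(1/(\eps\gamma\delta))}{\eps^2}\big)$ (folding the $\log\log$ term into the $\log$) gives the claimed $O\!\big(\frac{\log(1/\delta)}{\eps^2\gamma^2} + \frac{\log(1/(\eps\gamma\delta))}{\eps^2}\big)$. For the runtime, each of the $T_1$ inner updates in \eqref{eq:halfspace_iter} costs $O(d)$, and the selection step \Cref{line:selection} evaluates each of the $|H| \le T_1$ candidate hypotheses on $T_2$ points at $O(d)$ apiece, for $O(d T_1 + d T_1 T_2) = O(d T_1 T_2)$ total; plugging in the bounds above yields $O\!\big(\frac{d \log(1/\delta)\log(1/(\eps\gamma\delta))}{\eps^4\gamma^2}\big)$.

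Almost all of the real work is already done: the genuinely delicate points are the inverse-margin reweighting being a valid, $\gamma^{-1}$-bounded separating hyperplane (\Cref{lem:gradient_lb}) and the potential-function analysis of the perceptron-style update (\Cref{lem:sgd}). Assembling the theorem is then routine, and I expect the only mild friction to be the probabilistic bookkeeping --- splitting the $\delta$ budget between the ``$H$ contains a good iterate'' and ``hypothesis selection succeeds'' events, and checking that the ceilings in $N$ and $T$ (and the possibly-truncated last loop) are absorbed by the slightly generous lower bounds on $T_1$ and $T_2$ in the statement.
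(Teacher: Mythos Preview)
Your proposal is correct and follows essentially the same approach as the paper: apply \Cref{lem:sgd} independently to each of the $N$ loops to ensure $H$ contains a good iterate except with probability $\le \delta/2$, then invoke \Cref{lem:selection} on the held-out set $S$ and union bound. Your write-up is in fact more detailed than the paper's own proof, which dispatches correctness in two sentences and simply asserts that the runtime bottleneck is evaluating every candidate in $H$ on every point in $S$; your explicit $O(dT_1T_2)$ accounting and the bookkeeping around the truncated final loop are exactly the details the paper leaves implicit.
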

\begin{proof}
First, applying \Cref{lem:sgd} to each of the $N$ independent runs of \Cref{line:oneloop_start} to \Cref{line:oneloop_end} shows that the premise of \Cref{lem:selection} is met except with probability $\frac \delta 2$. The correctness claim then follows from \Cref{lem:selection}. The sample complexity is immediate, and the runtime bound follows because the bottleneck operation is computing the value of $h_{\w}(\x)$ for all $(\x, y) \in S$ and $\w \in H$.
\end{proof}

\section{Massart GLMs}
\label{sec:glm}

In this section, we prove our result regarding learning in the Massart generalized linear model with a margin. Our analysis is similar to that of \Cref{sec:halfspace} but requires a modified version of \Cref{lem:gradient_lb}, which gives a ``bounded'' separating hyperplane for Massart GLMs (\Cref{lem:separating-hyperplane-massart-glm}). We first restate our model, slightly generalized to relax the assumption that $\sigma$ is odd. 
\begin{definition}[Massart GLM]
\label{defn:massart_glm_appendix}
Let $\sigma:[-1, 1]\to [-1,1]$ be a non-decreasing function. We say that a distribution $D$ on $\B^d \times \cube$ is an instance of the \emph{$\sigma$-Massart generalized linear model (GLM) with constant shift $\tau$ and margin $\gamma$} if the following conditions hold.
\begin{itemize}
    \item $\bigr||\sigma(t)| - |\sigma(-t)|\bigr| \le \tau$ for all $t \in [0, 1]$.
    \item There exists $\ws \in \R^d$ such that $\norm{\ws} = 1$ and $\pr[|\ws\cdot \x|<\gamma]=0$.
    \item For all $\x \in \supp(\Dx)$, there is an $\eta(\x) \in [0, \frac{1 - |\sigma(\ws \cdot \x)|}{2}]$ such that
\[\pr_{y \sim D_y(\x)}\Brack{y \neq \hws(\x)} = \eta(\x).\]
\end{itemize}
\end{definition}

In the setting of \Cref{defn:massart_glm_appendix}, instead of being upper bounded by a fixed constant $\eta$, the noise rate is data-dependent and upper bounded by $\frac{1-\sigma(\ws \cdot \x)}{2}$ where $\sigma$ is non-decreasing and $\ws$ is the optimal halfspace. Moreover, $\sigma$'s deviation from being odd is governed by the parameter $\tau$. 

Inspired by our approach in \Cref{sec:halfspace}, here we begin by proposing a novel separating hyperplane based on our previously-described reweighting scheme. We first argue that $\E_{\x,y}\big[\frac{(\sigma(\vw\cdot\x)-y)}{|\vw\cdot\x|}\x\big]$ is a valid separating hyperplane, generalizing \Cref{claim:warmup_separating_hyperplane}.

\begin{lemma}[Separating hyperplane for Massart GLMs]
    \label{lem:glm_gradient_sketch}
     Let $D$ be an instance of the $\sigma$-Massart GLM (Definition~\ref{defn:massart_glm_appendix}), and $\vw\in \R^{d}$ have $\lzo(\vw)\geq \optrcn+\epsilon$. It holds that
\[
\E_{(\x,y)\sim D}\Brack{\frac{(\sigma(\vw\cdot\x)-y)}{|\vw\cdot \x|}\x}
\cdot (\vw-\vw^\star)\geq 2\epsilon\,.
\]
\end{lemma}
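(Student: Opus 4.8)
The plan is to mirror the warmup \Cref{claim:warmup_separating_hyperplane}: exhibit a convex surrogate loss whose $\vw$-(sub)gradient, divided pointwise by $|\vw\cdot\x|$, equals the candidate certificate $\vg\defeq\E_{(\x,y)\sim D}[(\sigma(\vw\cdot\x)-y)\x/|\vw\cdot\x|]$, and then lower-bound the surrogate using a GLM analogue of the reweighted-Leaky-ReLU identity \eqref{eq:reweight_lr}. Concretely, set $\Sigma(t)\defeq\int_0^t\sigma(s)\,\d s$, which is convex (as $\sigma$ is non-decreasing) and even (as $\sigma$ is odd), and let $L_\vw(\x,y)\defeq\Sigma(\vw\cdot\x)-y(\vw\cdot\x)$; this is convex in $\vw$ with subgradient $(\sigma(\vw\cdot\x)-y)\x$. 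By convexity, $(\sigma(\vw\cdot\x)-y)\x\cdot(\vw-\vw^\star)\ge L_\vw(\x,y)-L_{\vw^\star}(\x,y)$; dividing by the positive scalar $|\vw\cdot\x|$ and taking $\E_{(\x,y)\sim D}$ gives $\vg\cdot(\vw-\vw^\star)\ge\E_\x[(\E_{y\mid\x}[L_\vw]-\E_{y\mid\x}[L_{\vw^\star}])/|\vw\cdot\x|]$, pulling $1/|\vw\cdot\x|$ out of the inner expectation since it is $y$-free. (The edge cases $\vw\cdot\x=0$ and $\|\vw\|>1$ are benign: restrict WLOG to $\vw\in\B^d$ and, if needed, extend $\sigma$ to be constant outside $[-1,1]$, preserving monotonicity and oddness.)

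Next, use $\E_{y\mid\x}[y]=\beta(\x)\sign(\vw^\star\cdot\x)$ with $\beta(\x)\defeq1-2\eta(\x)$, and abbreviate $r\defeq|\vw\cdot\x|$, $r^\star\defeq|\vw^\star\cdot\x|$, $m(\x)\defeq|\sigma(\vw^\star\cdot\x)|=\sigma(r^\star)$ (using oddness). Then $\E_{y\mid\x}[L_\vw]=\Sigma(r)-\beta(\x)\sign(\vw^\star\cdot\x)(\vw\cdot\x)$ (using that $\Sigma$ is even, so $\Sigma(\vw\cdot\x)=\Sigma(r)$) and $\E_{y\mid\x}[L_{\vw^\star}]=\Sigma(r^\star)-\beta(\x)r^\star$. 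Splitting on the agreement region $A\defeq\{\x:\hw(\x)=\hws(\x)\}$, in either case $\sign(\vw^\star\cdot\x)(\vw\cdot\x)=\pm r$, and a short simplification shows the desired pointwise bound $(\E_{y\mid\x}[L_\vw]-\E_{y\mid\x}[L_{\vw^\star}])/r\ge\1\{\x\in A\}(m(\x)-\beta(\x))+\1\{\x\notin A\}(m(\x)+\beta(\x))$ collapses, in both cases, to the single inequality $\Sigma(r)-m(\x)r\ge\Sigma(r^\star)-\beta(\x)r^\star$. I would establish this in two steps: (i) $g(t)\defeq\Sigma(t)-m(\x)t$ has $g'(t)=\sigma(t)-\sigma(r^\star)$, which by monotonicity of $\sigma$ is $\le0$ for $t\le r^\star$ and $\ge0$ for $t\ge r^\star$, so $g$ is globally minimized at $r^\star$, whence $\Sigma(r)-m(\x)r\ge\Sigma(r^\star)-m(\x)r^\star$; and (ii) the GLM noise assumption $\eta(\x)\le\tfrac{1-|\sigma(\vw^\star\cdot\x)|}{2}$ is exactly $m(\x)\le\beta(\x)$, so $\Sigma(r^\star)-m(\x)r^\star\ge\Sigma(r^\star)-\beta(\x)r^\star$ since $r^\star\ge0$.

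Finally, take $\E_\x$ of the pointwise bound: the elementary identity $\lzo(\vw)=\E_\x[\1\{\x\in A\}\eta(\x)+\1\{\x\notin A\}(1-\eta(\x))]$, together with $\eta(\x)=\tfrac{1-\beta(\x)}{2}$ and $\optrcn=\E_\x[\tfrac{1-m(\x)}{2}]$, gives $2(\lzo(\vw)-\optrcn)=\E_\x[\1\{\x\in A\}(m(\x)-\beta(\x))+\1\{\x\notin A\}(m(\x)+\beta(\x))]$, so $\vg\cdot(\vw-\vw^\star)\ge2(\lzo(\vw)-\optrcn)\ge2\eps$ by hypothesis. I expect the crux to be the pointwise inequality $\Sigma(r)-m(\x)r\ge\Sigma(r^\star)-\beta(\x)r^\star$: unlike the halfspace case, where \eqref{eq:reweight_lr} is an exact identity, here the gap between the surrogate value $\Sigma(r)$ and the ``target'' $m(\x)r$ must be absorbed using convexity of $\Sigma$ and monotonicity of $\sigma$. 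Oddness of $\sigma$ only enters the bookkeeping ($\Sigma$ even and $m(\x)=\sigma(r^\star)$); for the asymmetric $\sigma$ of \Cref{defn:massart_glm_appendix} these steps would pick up $O(\tau)$ error terms tracked through the same computation.
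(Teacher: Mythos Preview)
Your proof is correct but takes a genuinely different route from the paper's. The paper argues directly: after taking the $y$-expectation it writes $g(\x)=(\sigma(\vw\cdot\x)-\beta(\x)\sign(\vw^\star\cdot\x))\,\x\cdot(\vw-\vw^\star)/|\vw\cdot\x|$ and bounds it by bare-hands casework on $A$ versus $A^c$, with a further sub-split of $A^c$ according to whether $|\vw\cdot\x|\lessgtr|\vw^\star\cdot\x|$, using monotonicity of $\sigma$ in each branch. You instead introduce the matching loss $\Sigma(t)=\int_0^t\sigma$, use convexity of $\Sigma(\vw\cdot\x)-y(\vw\cdot\x)$ once, and reduce \emph{both} agreement and disagreement cases to the single scalar inequality $\Sigma(r)-m(\x)\,r\ge\Sigma(r^\star)-\beta(\x)\,r^\star$, dispatched by the elementary observation that $t\mapsto\Sigma(t)-\sigma(r^\star)\,t$ is minimized at $r^\star$ together with $m(\x)\le\beta(\x)$. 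This is cleaner---no sub-case split---and ties the certificate to the classical GLM matching-loss picture. The paper's more hands-on manipulation, on the other hand, carries over essentially verbatim to the padded denominator $|\vw\cdot\x|+\alpha\gamma$ used in \Cref{lem:separating-hyperplane-massart-glm}, whereas your convexity-of-$\Sigma$ argument does not obviously survive that modification (the quantity $\Sigma(\vw\cdot\x)/(|\vw\cdot\x|+\alpha\gamma)$ has no clean convexity structure), so the paper's approach is the one that extends to the bounded certificate actually used in the algorithm.
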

\begin{proof}
We use the definition of the set $A$ and the signal level $\beta(\x)$ from \Cref{lem:gradient_lb}.
By expanding 
\[\lzo(\vw) = \E_{\x \sim D_{\x}}\Brack{\1\{\x \in A\}\eta(\x)} + \E_{\x \sim D_{\x}}\Brack{\1\{\x \not\in A\}(1 - \eta(\x))}\] 
similarly to \Cref{eq:optimality-gap-equivalent-form}, we obtain that 
\begin{equation}\label{eq:glm_equiv}\frac{1}{2}\cdot \E_{(\x,y)\sim D}\left[(|\sigma(\vw^\star\cdot \x)|-\beta(\x))\1\{\x\in A\}\right]+\frac{1}{2}\cdot\E_{(\x,y)\sim D}\left[(|\sigma(\vw^\star\cdot \x)|+\beta(\x))\1\{\x\notin A\}\right]\geq \epsilon.\end{equation} Let $g(\x)=\frac{(\sigma(\vw\cdot \x)-\beta(\x))}{|\vw\cdot\x|}\cdot(\vw-\vw^\star)$. Our goal is to argue that $\half \E_{\x \sim D_{\x}}[g(\x)]$ is at least the left-hand side of \Cref{eq:glm_equiv}, which immediately gives the claim. We do so via case analysis. 

For $\x\in A$, we observe that 
\[g(\x)\geq(|\sigma(\vw^\star\cdot\x)|-\beta(\x))\frac{|\vw\cdot\x|-|\vw^\star\cdot\x|}{|\vw\cdot\x|}\geq |\sigma(\vw^\star\cdot\x)|-\beta(\x).\]
Here, we obtained the first inequality by adding and subtracting the corresponding term with $\sigma(\vw^\star\cdot\x)$ and then using monotonicity. The final inequality follows from $\beta(\x)\geq |\sigma(\vw^\star\cdot\x)|$. 

For $\x\notin A$, we similarly obtain 

\[g(\x)=(|\sigma(\vw\cdot\x)|+\beta(\x))\frac{|\vw\cdot\x|+|\vw^\star\cdot\x|}{|\vw\cdot\x|}\geq |\sigma(\vw^\star\cdot\x)|+\beta(\x).\] 
Here the final inequality follows via another case analysis. If it is the case that $|\vw\cdot\x|\leq |\vw^\star\cdot\x|$, then $\frac{|\vw\cdot\x|+|\vw^\star\cdot\x|}{|\vw\cdot\x|}\geq 2$ and $2\beta(\x)\geq (|\sigma(\vw^\star\cdot\x)|+\beta(\x))$. Otherwise, we have $|\vw\cdot\x|\geq |\vw^\star\cdot\x|$; in this case $|\sigma(\vw\cdot\x)|\geq |\sigma(\vw^\star\cdot\x)|$ and hence we are done. 
\end{proof}

However, we are met with the same obstacle as before: $|\vw\cdot\x|$ can be arbitrarily small. The previous approach of adding $\gamma$ to the denominator does not work immediately. Instead, we add the rescaled term $\frac{\eps}{2-\eps}\cdot \gamma$. Adding a smaller term in the denominator increases the bound on the norm of the increments in each step, resulting to a larger bound on the number of iterations (and, therefore, sample complexity) by a factor of $\eps^{-2}$. However, this rescaling is useful to obtain an analogue of \Cref{lem:gradient_lb} for the case of Massart GLMs (\Cref{lem:separating-hyperplane-massart-glm}). The rescaling essentially accounts for the part of the distribution where $|\w\cdot \x|$ is smaller than $|\w^\star\cdot \x|$ and the signs disagree. This is important because the size of $|\w\cdot \x|$ is quantitatively more significant in the Massart GLM case.

Combining this with a modified version of the Perspectron algorithm and analysis (see \Cref{alg:massart_glm_margin}), we obtain our final result with sample complexity $\wt{O}(\gamma^{-2}\epsilon^{-4})$ (see \Cref{theorem:glm-massart}). 

We state and prove the following lemma which provides a separating hyperplane in this setting.

\begin{lemma}[Bounded separating hyperplane for Massart GLMs]\label{lem:separating-hyperplane-massart-glm}
Let $D$ be an instance of the $\sigma$-Massart GLM with constant shift $\tau$ and margin $\gamma$ (\Cref{defn:massart_glm_appendix}). Let $\vw\in \R^d$ be such that 
\[\Pr_{(\x,y)\sim D}[\sign(\vw\cdot \x)\neq y]\geq \E_{\x\sim D_{\x}}\Brack{\frac{1-|\sigma(\vw^\star\cdot x)|}{2}}+ \frac \tau 2 + \epsilon.\]
It holds that
\[
\E_{(\x,y)\sim D}\left[(\sigma(\vw.\x)-y)\frac{\x}{|\vw\cdot \x|+\alpha\gamma}\right]\cdot (\vw-\vw^\star)\geq \epsilon\,,\,\text{ for $\alpha \defeq \frac \eps {2-\epsilon}$}.
\] 
\end{lemma}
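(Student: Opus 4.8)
The plan is to mirror the proof of \Cref{lem:gradient_lb}, reusing the agreement/disagreement split and importing the agreement-region bound from \Cref{lem:glm_gradient_sketch}, while adding a new sub-case analysis inside the disagreement region to accommodate the bounded reweighting. By the Massart GLM assumption, $\E_{y\sim D_y(\x)}[y]=\beta(\x)\sign(\ws\cdot\x)$ where $\beta(\x)\defeq 1-2\eta(\x)\ge |\sigma(\ws\cdot\x)|$, so after taking the expectation over $y$ the left-hand side equals $\E_{\x\sim\Dx}[g(\x)]$ with $g(\x)\defeq \big(\sigma(\vw\cdot\x)-\beta(\x)\sign(\ws\cdot\x)\big)\cdot\frac{\vw\cdot\x-\ws\cdot\x}{|\vw\cdot\x|+\alpha\gamma}$. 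I would then split over the agreement region $A\defeq\{\x\in\B^d : \hws(\x)=\hw(\x)\}$ and its complement.

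On $A$, the argument of \Cref{lem:glm_gradient_sketch} carries over verbatim: adding and subtracting $\sigma(\ws\cdot\x)$ and using monotonicity of $\sigma$ discards one term, and the crude estimate $\frac{|\vw\cdot\x|-|\ws\cdot\x|}{|\vw\cdot\x|+\alpha\gamma}\le 1$ (valid for any nonnegative additive constant in the denominator, so replacing $\gamma$ by $\alpha\gamma$ is harmless here) together with $|\sigma(\ws\cdot\x)|-\beta(\x)\le 0$ gives $g(\x)\ge |\sigma(\ws\cdot\x)|-\beta(\x)$.

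The crux is the disagreement region, where $g(\x)=(|\sigma(\vw\cdot\x)|+\beta(\x))\cdot\frac{|\vw\cdot\x|+|\ws\cdot\x|}{|\vw\cdot\x|+\alpha\gamma}$, and I would lower bound it by $|\sigma(\ws\cdot\x)|+\beta(\x)$ up to a controlled additive loss via a further split. When $|\vw\cdot\x|\ge|\ws\cdot\x|$, the margin gives $\frac{|\vw\cdot\x|+|\ws\cdot\x|}{|\vw\cdot\x|+\alpha\gamma}\ge 1$ (using $|\ws\cdot\x|\ge\gamma\ge\alpha\gamma$), and monotonicity of $\sigma$ with the constant-shift bound yields $|\sigma(\vw\cdot\x)|\ge |\sigma(\ws\cdot\x)|-\tau$, so $g(\x)\ge |\sigma(\ws\cdot\x)|+\beta(\x)-\tau$. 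When $|\vw\cdot\x|<|\ws\cdot\x|$ the monotonicity trick fails on the $\sigma$ factor, so I would discard $|\sigma(\vw\cdot\x)|\ge 0$ and keep only $g(\x)\ge \beta(\x)\cdot\frac{|\vw\cdot\x|+|\ws\cdot\x|}{|\vw\cdot\x|+\alpha\gamma}$. A short one-variable calculation shows $\frac{a+b}{a+\alpha\gamma}$ is decreasing in $a$ and $\frac{2b}{b+\alpha\gamma}$ is increasing in $b$, so over $0\le a=|\vw\cdot\x|<b=|\ws\cdot\x|$, $b\ge\gamma$ the ratio is at least its value as $a\uparrow b=\gamma$, namely $\frac{2}{1+\alpha}=2-\frac{2\alpha}{1+\alpha}$; the definition $\alpha=\frac{\eps}{2-\eps}$ is exactly calibrated so that $\frac{2\alpha}{1+\alpha}=\eps$, hence $g(\x)\ge(2-\eps)\beta(\x)\ge 2\beta(\x)-\eps\ge (|\sigma(\ws\cdot\x)|+\beta(\x))-\eps$ (using $\beta(\x)\le 1$ and $|\sigma(\ws\cdot\x)|\le\beta(\x)$). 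Combining the two sub-cases, $g(\x)\ge |\sigma(\ws\cdot\x)|+\beta(\x)-(\tau+\eps)$ throughout the disagreement region.

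To finish, I would combine these pointwise bounds with the identity obtained exactly as in \Cref{eq:optimality-gap-equivalent-form} and \Cref{eq:glm_equiv}, i.e.\ by expanding $\lzo(\vw)$ over $A$ and $A^c$ and substituting $\eta(\x)=\tfrac12(1-\beta(\x))$, that
\[\lzo(\vw)-\optrcn=\tfrac12\,\E_{\x\sim\Dx}\Brack{\1\{\x\in A\}(|\sigma(\ws\cdot\x)|-\beta(\x))+\1\{\x\notin A\}(|\sigma(\ws\cdot\x)|+\beta(\x))}.\]
The hypothesis $\lzo(\vw)\ge\optrcn+\tfrac{\tau}{2}+\eps$ then forces the bracketed expectation to be at least $\tau+2\eps$, and plugging in $g(\x)\ge|\sigma(\ws\cdot\x)|-\beta(\x)$ on $A$ and $g(\x)\ge|\sigma(\ws\cdot\x)|+\beta(\x)-(\tau+\eps)$ on $A^c$ (the latter losing at most $\tau+\eps$ since $\Pr[\x\notin A]\le 1$) gives $\E_{\x\sim\Dx}[g(\x)]\ge(\tau+2\eps)-(\tau+\eps)=\eps$. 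The main obstacle is exactly the disagreement sub-case $|\vw\cdot\x|<|\ws\cdot\x|$: unlike \Cref{lem:glm_gradient_sketch}, whose $|\vw\cdot\x|^{-1}$ reweighting cancels scale and bypasses this entirely, we must quantify the loss from the bounded reweighting and check that $\alpha=\Theta(\eps)$ makes that loss precisely $\eps$ (this smaller additive term also inflates the norm bound on the stochastic increments to $O((\eps\gamma)^{-1})$, which is the source of the extra $\eps^{-2}$ in the GLM sample complexity); tracking the $\tau$-slack for non-odd $\sigma$ when comparing $|\sigma(\vw\cdot\x)|$ to $|\sigma(\ws\cdot\x)|$ is a more routine bookkeeping matter.
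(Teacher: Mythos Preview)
Your proposal is correct and follows essentially the same approach as the paper's proof: the same agreement/disagreement split, the same further split of $A^c$ into $|\vw\cdot\x|\ge|\ws\cdot\x|$ and $|\vw\cdot\x|<|\ws\cdot\x|$, and the same key computation showing the ratio $\frac{|\vw\cdot\x|+|\ws\cdot\x|}{|\vw\cdot\x|+\alpha\gamma}\ge 2-\eps$ in the latter sub-case. The only differences are cosmetic bookkeeping---the paper tracks $\max(0,|\sigma(\ws\cdot\x)|-\tau)$ in the $B_1$ sub-case and absorbs the $\tau/2$ into the error-gap identity before comparing, whereas you carry the weaker bound $|\sigma(\ws\cdot\x)|-\tau$ and subtract the combined $(\tau+\eps)$ at the end; both routes land on the same $\eps$ lower bound.
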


\begin{proof}
    Let $A \defeq \{\x\in \R^d\mid h_{\vw^\star}(\x) = h_{\vw}(\x)\}$. We proceed similarly to \Cref{lem:gradient_lb}: by expanding,
    \begin{align*}
    I&\defeq\E_{(\x,y)\sim D}\left[(\sigma(\vw\cdot\x)-y)\frac{\x}{|\vw\cdot \x|+\alpha\gamma}\right]\cdot (\vw-\vw^\star)\\
    &=\E_{\x\sim D_{\x}}\left[(\sigma(\vw\cdot \x)-\beta(\x)\sign(\vw^\star\cdot \x))\cdot\frac{(\vw\cdot \x-\vw^\star\cdot \x)}{|\vw\cdot \x|+\alpha\gamma}\right]
    \end{align*}
    Define $g(\x)=(\sigma(\vw\cdot \x)-\beta(\x)\sign(\vw^\star\cdot \x))\cdot \frac{(\vw\cdot \x-\vw^\star\cdot \x)}{|\vw\cdot \x|+\alpha\gamma}$. We proceed by casework on $\x \in A$ or $\x \not\in A$.
    
    First, for any $\x\in A$, we have that 
    \begin{align*}
        g(\x)&=\big(\sigma(\vw\cdot\x)-\sigma(\vw^\star\cdot \x)+\sigma(\vw^\star\cdot \x)-\beta(\x)\sign(\vw^\star\cdot \x)\big)\cdot\frac{(\vw\cdot \x-\vw^\star\cdot \x)}{|\vw\cdot \x|+\alpha\gamma}\\
        &\geq \big(|\sigma(\vw^\star\cdot \x)|\sign(\vw^\star\cdot \x)-\beta(\x)\sign(\vw^\star\cdot \x)\big)\cdot\frac{(\vw\cdot \x-\vw^\star\cdot \x)}{|\vw\cdot \x|+\alpha\gamma}\\
        &\geq \big(|\sigma(\vw^\star\cdot \x)|-\beta(\x)\big)\cdot \frac{|\vw\cdot \x|-|\vw^\star\cdot \x|}{|\vw\cdot \x|+\alpha\gamma}\geq|\sigma(\vw^\star\cdot \x)|-\beta(\x)\,. 
    \end{align*}
    The second inequality follows from the fact that $(\sigma(\vw\cdot\x)-\sigma(\vw^\star\cdot \x))\cdot(\vw\cdot \x-\vw^\star\cdot\x)\geq 0$ since $\sigma$ is monotonically increasing. The third inequality holds because $\sign(\vw\cdot \x)=\sign(\vw^\star\cdot \x)$. The final inequality is true because $\frac{|\vw\cdot \x|-|\vw^\star\cdot \x|}{|\vw\cdot \x|+\alpha\gamma}\leq 1$ and $\beta(\x)\geq |\sigma(\vw^\star\cdot \x)|$.

    We now consider the case where $\x\not\in A$. Since $\sign(\vw\cdot \x)\neq \sign(\vw^\star\cdot \x)$, we have that \[g(\x)=\big(|\sigma(\vw\cdot \x)|+\beta(\x)\big)\cdot\frac{|\vw\cdot \x|+|\vw^\star\cdot \x|}{|\vw\cdot \x|+\alpha\gamma}.\] 
    We split the complement of $A$ into two finer regions. Define $B_1=\{\x\not\in A\mid |\vw\cdot \x|\geq |\vw^\star\cdot \x|\}$ and $B_2=\{\x\not\in A\mid |\vw\cdot \x|< |\vw^\star\cdot \x|\}$. We conclude via another casework.
    
    First, consider $\x\in B_1$. We have that $|\sigma(\vw\cdot \x)|\geq \max(0,|\sigma(-\vw\cdot \x)|-\tau)$. We also have that $|\sigma(-\vw\cdot \x|)\geq |\sigma(\vw^\star\cdot \x)|$ since $|\vw\cdot \x|\geq |\vw^\star\cdot \x|$ and $\sigma$ is monotone non-decreasing. Also, observe that $\frac{|\vw\cdot \x|+|\vw^\star\cdot \x|}{|\vw\cdot \x|+\alpha\gamma}\geq 1$ since $|\vw^\star\cdot \x|\geq \gamma$. Thus, we obtain that in this case, 
    \[g(\x)\geq \max(0,|\sigma(\vw^\star\cdot \x)|-\tau)+\beta(\x) \ge \max(0,|\sigma(\vw^\star\cdot \x)|-\tau)+\beta(\x) - \eps.\]
    Finally, we consider $\x\in B_2$. Let $c(\x) \defeq \frac{|\vw^\star\cdot \x|}{|\vw\cdot \x|}$ where $c(\x) > 1$ for $\x \in B_2$. We have that 
    \begin{equation}\label{eqn:advantage_glm}
    \begin{aligned}
        g(\x) &\geq \beta(\x) \cdot \frac{|\vw^\star\cdot \x|+|\vw\cdot\x|}{|\vw\cdot \x|+\alpha\gamma}
         = \beta(\x) \cdot \frac{1+\frac{|\vw^\star\cdot \x|}{|\vw\cdot \x|}}{1+\alpha\frac{\gamma}{|\vw^\star\cdot \x|}\frac{|\vw^\star\cdot \x|}{|\vw\cdot \x|}} \\
         &\geq \beta(\x) \cdot \frac{1+c(\x)}{1+\alpha c(\x)}\geq (2-\epsilon)\beta(\x)\,.
         \end{aligned}
    \end{equation}
    The second inequality follows from the fact that $|\vw^\star\cdot \x|\geq \gamma$, and the last inequality is true because $\frac{1+c}{1+\alpha c}\geq 2-\epsilon$ for any $c\geq 1$ when $\alpha=\frac{\epsilon}{2-\epsilon}$. Since $1\geq \beta(\x)\geq |\sigma(\vw^\star\cdot x)|$, we again have that 
    \[g(\x)\geq \beta(\x)+|\sigma(\vw^\star\cdot \x)|-\epsilon \ge \max(0,|\sigma(\vw^\star\cdot \x)|-\tau)+\beta(\x) - \eps.\]

Thus, combining these two cases, we obtain that
    \begin{align}
    I &\geq \E_{(\x,y)\sim D}\Brack{\big(|\sigma(\vw^\star\cdot \x)|-\beta(\x)\big)\1\{\x\in A\}}\nonumber \\
    &+\E_{(\x,y)\sim D}\Brack{\big(\max(0,|\sigma(\vw^\star\cdot \x)|-\tau)+\beta(\x)\big)\1\{\x\not\in A\}}-\epsilon. \label{eqn:glm_massart_step}
    \end{align}

    We now use our assumption on the error of $\vw$. From an analogous derivation to \Cref{eq:glm_equiv},
    \begin{align*}
        \epsilon&\leq \Pr_{(\x,y)\sim D}[\sign(\vw\cdot \x)\neq y]-\E_{\x\sim D_{\x}}\left[\frac{1-|\sigma(\vw^\star\cdot \x)|}{2}\right] - \frac \tau 2 \\
        &\le \half \cdot \E_{(\x,y)\sim D}\left[\big(|\sigma(\vw^\star\cdot \x)|-\beta(\x)\big)\1\{\x\in A\}\right] \\
        &+ \half \cdot \E_{(\x,y)\sim D}\left[\big(|\sigma(\vw^\star\cdot \x)| - \tau +\beta(\x)\big)\1\{\x\not\in A\}\right] \\
        &\le \half \cdot \E_{(\x,y)\sim D}\left[\big(|\sigma(\vw^\star\cdot \x)|-\beta(\x)\big)\1\{\x\in A\}\right] \\
        &+ \half \cdot \E_{(\x,y)\sim D}\left[\big(\max(0, |\sigma(\vw^\star\cdot \x)| - \tau) +\beta(\x)\big)\1\{\x\not\in A\}\right].
    \end{align*}
    Plugging this into \Cref{eqn:glm_massart_step}, we obtain that $I\geq 2\epsilon-\epsilon\geq \epsilon$. This completes the proof.
\end{proof}

We can now prove our main theorem about Massart GLMs. The algorithm we use, \Cref{alg:massart_glm_margin}, is a slightly-modified version of the Perspectron algorithm (\Cref{alg:massart_margin}), where we substitute the value of the parameter $\gamma$ with $\gamma\cdot \frac{\eps}{2-\eps}$ and the updates involve the function $\sigma$.

\begin{theorem}\label{theorem:glm-massart}
Let $D$ be an instance of the $\sigma$-Massart GLM with constant shift $\tau$ and margin $\gamma$ (\Cref{defn:massart_glm_appendix}), and let $\eps, \delta \in (0, 1)$. \Cref{alg:massart_glm_margin} with $T_1 \ge \frac{32}{\eps^4\gamma^2}\lceil\log_2(\frac 2 \delta) \rceil$, $T_2 \ge \frac{8}{\eps^2}\log(\frac{4|T_1|}{\delta})$ returns $\w$ such that $\lzo(\w) \le \E_{\x\sim D_{\x}}\Brack{\frac{1-|\sigma(\vw^\star\cdot \x|}{2}} + \frac \tau 2 + \eps$ with probability $\ge 1 - \delta$, using
\[O\Par{\frac{\log(\frac 1 \delta)}{\eps^4 \gamma^2} + \frac{\log(\frac 1 {\eps\gamma\delta})}{\eps^2}}\text{ samples and } O\Par{\frac{d\log(\frac 1 \delta)\log(\frac 1 {\eps\delta})}{\eps^6\gamma^2}} \text{ time.}\]
\end{theorem}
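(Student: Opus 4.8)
The plan is to follow the proof of \Cref{theorem:halfspace-massart} almost verbatim, replacing \Cref{lem:gradient_lb} by \Cref{lem:separating-hyperplane-massart-glm}. Recall that \Cref{alg:massart_glm_margin} runs $N \defeq \lceil \log_2(\frac 2\delta)\rceil$ independent copies of a perceptron-style inner loop, each started from $\0_d$ and iterating
\[\w^{t+1} \gets \w^{t} - \lam\,\frac{\sigma(\w^{t}\cdot\x^{t}) - y^{t}}{|\w^{t}\cdot\x^{t}| + \alpha\gamma}\,\x^{t}, \qquad \alpha \defeq \frac{\eps}{2-\eps},\]
then pools all iterates into $H$ and returns the one minimizing empirical $\lzo$ on a fresh hold-out set. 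Thus there are three ingredients: (i) a GLM analogue of \Cref{lem:sgd}; (ii) the observation that the argument of \Cref{lem:selection} is distribution-free and carries over unchanged with $\optrcn + \frac\tau2$ in place of the target error $\eta$; and (iii) the same outer union bound over the $N$ runs as in \Cref{theorem:halfspace-massart}.

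For (i), I would rerun the potential argument of \Cref{lem:sgd}. Call $\w$ \emph{bad} if $\lzo(\w) \ge \optrcn + \frac\tau2 + \frac\eps2$, let $\event_t$ be the event that $\w^{1},\dots,\w^{t}$ are all bad, and set $\Phi_t \defeq \E[\1\{\event_t\}\,\norm{\ws - \w^{t}}^2]$. Expanding $\norm{\ws - \w^{t+1}}^2$ and using $\1\{\event_{t+1}\} \le \1\{\event_t\}$: the single fresh sample $(\x^{t},y^{t})$ makes $\frac{\sigma(\w^{t}\cdot\x^{t}) - y^{t}}{|\w^{t}\cdot\x^{t}|+\alpha\gamma}\x^{t}$ an unbiased estimator of $\E_{(\x,y)\sim D}\big[(\sigma(\w^{t}\cdot\x) - y)\frac{\x}{|\w^{t}\cdot\x|+\alpha\gamma}\big]$, so conditioning on $\event_t$ (on which $\w^{t}$ is bad) and invoking \Cref{lem:separating-hyperplane-massart-glm} with $\eps \gets \frac\eps2$ bounds the cross term by a drift of order $-\lam\eps\,\Pr[\event_t]$; meanwhile $|\sigma(\w^{t}\cdot\x^{t}) - y^{t}| \le 2$ and $|\w^{t}\cdot\x^{t}| + \alpha\gamma \ge \alpha\gamma$ bound the squared increment by $\frac{4\lam^2}{\alpha^2\gamma^2}$. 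Telescoping over $t \in [T]$ with $\Phi_1 \le \norm{\ws}^2 = 1$, $\Phi_{T+1} \ge 0$, and $\Pr[\event_t] \ge \Pr[\event_T]$, then taking $\lam$ of order $\frac{\alpha\gamma}{\sqrt T}$, forces $\Pr[\event_T] \le \frac12$ as soon as $T$ exceeds a fixed constant times $\frac1{\eps^2\alpha^2\gamma^2} = \Theta(\eps^{-4}\gamma^{-2})$, using $\alpha = \Theta(\eps)$. The extra $\eps^{-2}$ relative to \Cref{lem:sgd} comes precisely from the smaller additive term $\alpha\gamma$ (rather than $\gamma$) in the denominator --- exactly the trade-off flagged in the discussion preceding \Cref{lem:separating-hyperplane-massart-glm}.

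Given (i), the rest is assembly. Applying (i) to each of the $N$ independent inner loops (each consuming $T \approx T_1/N$ of the first $T_1$ samples), the chance that none produces a good iterate is $\le 2^{-N} \le \frac\delta2$, so with probability $\ge 1 - \frac\delta2$ the pool $H$ contains some $\hat\w$ with $\lzo(\hat\w) \le \optrcn + \frac\tau2 + \frac\eps2$. Then, with $T_2 \ge \frac8{\eps^2}\log(\frac{4|T_1|}\delta)$ hold-out samples, the argument of \Cref{lem:selection} (Hoeffding plus a union bound over $H$, with $\frac\eps4$ estimation error) shows the returned $\w$ has $\lzo(\w) \le \optrcn + \frac\tau2 + \eps$ with probability $\ge 1 - \delta$. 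The total sample count is $T_1 + T_2 = O\big(\frac{\log(1/\delta)}{\eps^4\gamma^2} + \frac{\log(1/(\eps\gamma\delta))}{\eps^2}\big)$, and the runtime is dominated by hypothesis selection, which evaluates $h_\w(\x)$ over all $\w \in H$ and $(\x,y) \in S$ in time $O(d|H|\,|S|) = O(dT_1T_2)$.

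I do not expect a genuine obstacle: the one new technical fact, \Cref{lem:separating-hyperplane-massart-glm}, is already established, and everything above is the same two-step template (single-sample separation oracle, then perceptron-style potential decrease) used for the halfspace case. The points that need care are bookkeeping rather than ideas: (a) defining ``bad'' a full $\frac\eps2$ below the final target so that hypothesis selection has slack; (b) carrying the constant-shift parameter $\tau$ consistently through both \Cref{lem:separating-hyperplane-massart-glm} and the final error accounting, so the guarantee reads $\lzo(\w) \le \E_{\x\sim D_\x}[\frac{1-|\sigma(\ws\cdot\x)|}2] + \frac\tau2 + \eps$; and (c) remembering that the margin assumption constrains $\ws$ and not the iterate $\w^{t}$, so that the $+\alpha\gamma$ in the denominator is what keeps the increments bounded in norm (and is why $\alpha$ cannot be taken to $0$). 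One should also note $\sigma$ is only defined on $[-1,1]$, so the iterates should be understood as clipped to $\B^d$ (equivalently, $\sigma$ extended monotonically), which only helps the potential bound since $\ws \in \B^d$.
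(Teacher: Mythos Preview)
Your proposal is correct and matches the paper's own proof essentially line for line: the paper also states an inner claim obtained by rerunning the potential argument of \Cref{lem:sgd} with \Cref{lem:separating-hyperplane-massart-glm} in place of \Cref{lem:gradient_lb} and $\alpha\gamma$ in place of $\gamma$, then invokes \Cref{lem:selection} and the outer union bound to finish. Your flagged bookkeeping points (the $\frac\eps2$ slack, carrying $\tau$, and extending $\sigma$ outside $[-1,1]$) are exactly the right things to watch, and the paper glosses over them in the same way.
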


\begin{algorithm2e}[H]
	\caption{$\mathsf{GLM Perspectron}$}
	\label{alg:massart_glm_margin}
	\DontPrintSemicolon
		\codeInput $\{\x^i, y^i\}_{i \in [T_1 + T_2]} \subset \R^d \times \cube$ drawn i.i.d.\ from $D$ in the $\sigma$-Massart GLM model with margin $\gamma$, parameter $\alpha\in(0,1)$, step size $\lam > 0$, failure probability $\delta \in (0, \half)$\;
        $\beta \gets 1 - 2\eta$, $N \gets \lceil \log_2(\frac 2 \delta) \rceil $, $T \gets \lceil \frac{T_1} N\rceil$, $\alpha \gets \frac{\eps}{2-\eps}$\;
        $H \gets \emptyset$\;
        \For{$j \in [N]$}{
        $\w^{1, j} \gets \0_d$\;
        \For{$t \in [\min(T, T_1 - (j - 1)T)]$}{\label{line:oneloop_start_GLM}
        $i \gets (j - 1)T + t$\;
        $\w^{t + 1, j} \gets\w^{t, j} - \lam \frac{\sigma(\w^{t, j} \cdot \x^{i}) - y^{i}}{|\w^{t, j} \cdot \x^i| + \alpha\gamma} \x^i$
        }\label{line:oneloop_end_GLM}
        $H \gets H \cup \{\w^{t, j}\}_{t \in [\min(T, T_1 - (j - 1)T)]}$\;
        }
        $S \gets \{\x^i, y^i\}_{i = T_1 + 1}^{T_1 + T_2}$\;
        $\w \gets \arg\min_{\w \in H} \pr_{(\x, y) \simu S}[h_{\w}(\x) \neq y]$\;\label{line:selection_GLM}
        \codeReturn $h_{\w}$\;
\end{algorithm2e}

\begin{proof}
    Given \Cref{lem:separating-hyperplane-massart-glm}, the first step of the proof is exactly analogous to the proof of \Cref{lem:sgd}, i.e., we can show the following claim by using \Cref{lem:separating-hyperplane-massart-glm} in place of \Cref{lem:gradient_lb}.
    \begin{claim}
        Let $\{\x^i, y^i\}_{i \in [T]} \sim_{i.i.d.} D$, where $D$ is an instance of the $\sigma$-Massart GLM with constant shift $\tau$ and margin $\gamma$ (\Cref{defn:massart_glm_appendix}). Consider iterating, from $\w^1 \defeq \0_d$,
        \begin{equation}\label{eq:GLM_iter}\w^{t + 1} \gets \w^t - \lam \frac{\sigma(\w^t \cdot \x^t) - y^t}{|\w^t \cdot \x^t| + \gamma \eps/(2-\eps)}\x^t,\end{equation}
        for $\lam \defeq \frac{\gamma\eps}{(2-\eps)\sqrt{2T}}$. Then if $T \ge \frac{32}{\eps^4 \gamma^2}$,
        $\pr[\min_{t \in [T]} \lzo(\w^t) \ge \E_{\x\sim D_{\x}}[\frac{1-|\sigma(\vw^\star\cdot \x|}{2}] + \frac {\tau + \eps} 2] \le \half$.
        \end{claim}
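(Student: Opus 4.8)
The plan is to mimic the argument of \Cref{lem:sgd} almost verbatim, using \Cref{lem:separating-hyperplane-massart-glm} as the drop-in replacement for \Cref{lem:gradient_lb}. First I would introduce the same potential function $\Phi_t \defeq \E[\1\{\event_t\} \norm{\ws - \w^t}^2]$, where now $\event_t$ is the event that every iterate $\w^1, \dots, \w^t$ produced by \Cref{eq:GLM_iter} is ``bad'', meaning its zero-one loss exceeds $\E_{\x \sim D_{\x}}[\frac{1-|\sigma(\ws \cdot \x)|}{2}] + \frac{\tau + \eps}{2}$. Expanding $\norm{\ws - \w^{t+1}}^2$ and using $\1\{\event_{t+1}\} \le \1\{\event_t\}$ gives the recursion $\Phi_{t+1} \le \Phi_t + \lam^2 \E[\norm{(\sigma(\w^t \cdot \x^t) - y^t) \x^t / (|\w^t \cdot \x^t| + \alpha\gamma)}^2] - 2\lam \Pr[\event_t]\, \E[ (\sigma(\w^t \cdot \x^t) - y^t)(\w^t - \ws)\cdot \x^t /(|\w^t \cdot \x^t| + \alpha\gamma) \mid \event_t]$.

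Next I would bound the two error terms. For the variance term, $|\sigma(\w^t \cdot \x^t) - y^t| \le 2$ since $\sigma$ maps into $[-1,1]$ and $y \in \cube$, and $\x^t \in \B^d$, so the squared increment norm is at most $4/(\alpha\gamma)^2$; with $\alpha = \frac{\eps}{2-\eps}$ this is at most $4(2-\eps)^2/(\eps^2\gamma^2) \le 16/(\eps^2\gamma^2)$. For the inner-product term, conditioned on $\event_t$ the iterate $\w^t$ satisfies the hypothesis of \Cref{lem:separating-hyperplane-massart-glm} (its error is at least $\E_{\x}[\frac{1-|\sigma(\ws\cdot\x)|}{2}] + \frac{\tau}{2} + \frac{\eps}{2}$, so applying that lemma with accuracy parameter $\eps/2$), so the conditional expectation is at least $\eps/2$. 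This yields $\Phi_{t+1} \le \Phi_t + \frac{16\lam^2}{\eps^2\gamma^2} - \lam\eps\Pr[\event_t]$. Telescoping over $t \in [T]$, using $\Phi_1 \le \norm{\ws}^2 = 1$, $\Phi_{T+1} \ge 0$, and $\Pr[\event_t] \ge \Pr[\event_T]$, gives $\lam\eps T\Pr[\event_T] \le 1 + \frac{16\lam^2 T}{\eps^2\gamma^2}$. Plugging in $\lam = \frac{\gamma\eps}{(2-\eps)\sqrt{2T}} \le \frac{\gamma\eps}{\sqrt{2T}}$ and $T \ge \frac{32}{\eps^4\gamma^2}$ makes the right side $\le 2\lam\eps T \cdot \half$, giving $\Pr[\event_T] \le \half$, which proves the claim.

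With the claim in hand, the remainder of the theorem follows exactly as in the proof of \Cref{theorem:halfspace-massart}: a union bound over the $N = \lceil \log_2(2/\delta)\rceil$ independent runs of the inner loop shows that with probability $\ge 1 - \delta/2$ some $\hat\w \in H$ has $\lzo(\hat\w) \le \E_{\x}[\frac{1-|\sigma(\ws\cdot\x)|}{2}] + \frac{\tau+\eps}{2} \le \E_{\x}[\frac{1-|\sigma(\ws\cdot\x)|}{2}] + \frac\tau2 + \frac\eps2$, and then \Cref{lem:selection} (whose proof is model-agnostic, relying only on Hoeffding and independence of $S$ from $H$, with the target value $\E_{\x}[\frac{1-|\sigma(\ws\cdot\x)|}{2}] + \frac\tau2$ in place of $\eta$) upgrades this to $\lzo(\w) \le \E_{\x}[\frac{1-|\sigma(\ws\cdot\x)|}{2}] + \frac\tau2 + \eps$ with probability $\ge 1 - \delta$ given $T_2 \ge \frac{8}{\eps^2}\log(\frac{4|T_1|}{\delta})$. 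The sample complexity $O(\frac{\log(1/\delta)}{\eps^4\gamma^2} + \frac{\log(1/(\eps\gamma\delta))}{\eps^2})$ is immediate from $T_1 + T_2$, and the runtime is dominated by evaluating $h_\w(\x)$ over all $\w \in H$ and $(\x,y) \in S$, giving $O(d|H|\cdot T_2) = O(\frac{d\log(1/\delta)\log(1/(\eps\delta))}{\eps^6\gamma^2})$.

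The main obstacle — really the only non-mechanical point — is verifying that conditioning on $\event_t$ preserves the applicability of \Cref{lem:separating-hyperplane-massart-glm} with the right slack: one must be careful that the ``bad'' threshold is chosen so that a bad $\w^t$ has error at least $\E_{\x}[\frac{1-|\sigma(\ws\cdot\x)|}{2}] + \frac\tau2 + \frac\eps2$, matching the lemma's hypothesis with its $\epsilon$ set to $\eps/2$, and that the resulting separation lower bound of $\eps/2$ (rather than $2\eps$ as in the $|\vw\cdot\x|^{-1}$ case) combined with the larger increment bound $16/(\eps^2\gamma^2)$ still closes with the stated choices of $\lam$ and $T$. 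Everything else is bookkeeping identical to \Cref{sec:halfspace}.
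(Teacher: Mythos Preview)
Your proposal is correct and takes exactly the paper's approach: reproduce the potential argument of \Cref{lem:sgd} with \Cref{lem:separating-hyperplane-massart-glm} in place of \Cref{lem:gradient_lb} and $\alpha\gamma$ in place of $\gamma$, then reuse the amplification and \Cref{lem:selection} verbatim. One minor caution (which the paper's one-line sketch also elides): in \Cref{lem:separating-hyperplane-massart-glm} the padding $\alpha$ is tied to its accuracy parameter via $\alpha=\epsilon/(2-\epsilon)$, so invoking the lemma ``with accuracy parameter $\eps/2$'' while the algorithm fixes $\alpha=\eps/(2-\eps)$ is not literally what the lemma states---tracing its proof with this $\alpha$ yields $I\ge 2\cdot(\text{excess})-\eps$, which vanishes at excess $\eps/2$. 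This is only a constant-factor nuisance (e.g., set $\alpha$ to match $\eps/2$, or define ``bad'' with full excess $\eps$ and absorb the extra constant into $T$), not a gap in the approach.
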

        The rest of the proof is analogous to that of \Cref{lem:sgd}, but we use $\alpha\gamma$ in place of $\gamma$. To amplify the success probability and finish the proof, we once more use \Cref{lem:selection}.
\end{proof}

\section*{Acknowledgements}

We thank Ilias Diakonikolas and Nikos Zarifis for coordinating arXiv and NeurIPS submissions with regards to their concurrent and independent work, \cite{DiakonikolasZ24}.

\newpage

\bibliographystyle{alpha}
\bibliography{ref}

\newpage
\appendix

\section{Learning with unknown noise rate}
\label{sec:tolerant_noise}
In this section, we give an overview of how to obtain the same sample complexity as \Cref{theorem:halfspace-massart} even when the noise rate $\eta$ is unknown to the learner. 
The argument is standard and is as follows: we argue that our separating hyperplane (\Cref{lem:gradient_lb}) is tolerant to $O(\epsilon)$ noise in the parameter $\eta$. We can then discretize the interval $[0,\half]$ into intervals of size $\epsilon$ and run the training algorithm multiple times for these different choices. Then, we can output the hypothesis with lowest validation error among the classifiers output by these different runs of the algorithm. 

We now provide the argument that this idea indeed works, i.e., that \Cref{lem:gradient_lb} is robust.

\begin{lemma}
    \label{lem:tolerant_gradient_lb}
    If $D$ is an instance of the $\eta$-Massart halfspace model with margin $\gamma$, and $\w \in \R^d$ has $\lzo(\w)  \ge \eta + \eps$, then for any $\tilde{\beta}\in (1 - 2\eta -\epsilon,1 - 2\eta]$,
\[
\E_{(\x, y) \sim D}\left[ (\tilde{\beta} \sgn(\vec w \cdot \x) - y) \frac{\x}{|\vec w \cdot \x| + \gamma} \right] \cdot (\vec w - \vw^\star)  \geq 2 \eps.
\]
\end{lemma}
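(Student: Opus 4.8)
The plan is to treat \Cref{lem:tolerant_gradient_lb} as a robustness check on \Cref{lem:gradient_lb}: inspecting that proof, the only properties of $\beta = 1-2\eta$ it uses are that $\beta \le \beta(\x) \defeq 1 - 2\eta(\x)$ pointwise and that $\beta + \beta(\x) \ge 0$, and both survive replacing $\beta$ by $\tilde\beta$ (for the first, $\tilde\beta \le \beta \le \beta(\x)$ since $\eta(\x)\le\eta$; for the second, we may assume $\tilde\beta \ge 0$, as a value $\tilde\beta < 0$ corresponds to a putative noise rate above $\tfrac12$ which is never relevant, and $\beta(\x)\ge 0$). Concretely, first take the conditional expectation over $y \mid \x$, using $\E_{y \sim D_y(\x)}[y] = \beta(\x)\sgn(\vw^\star \cdot \x)$ from the Massart model, to rewrite the left-hand side as
\[I_{\tilde\beta} \defeq \E_{\x \sim \Dx}\Brack{\Par{\tilde\beta\,\sgn(\vw \cdot \x) - \beta(\x)\sgn(\vw^\star\cdot\x)}\frac{\vw\cdot\x - \vw^\star\cdot\x}{|\vw\cdot\x| + \gamma}}.\]

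Next, split on the agreement region $A \defeq \{\x : h_{\vw^\star}(\x) = h_{\vw}(\x)\}$ exactly as in \Cref{lem:gradient_lb}. For $\x \in A$ the integrand equals $(\tilde\beta - \beta(\x))\tfrac{|\vw\cdot\x| - |\vw^\star\cdot\x|}{|\vw\cdot\x|+\gamma} \ge \tilde\beta - \beta(\x)$, since $\tilde\beta - \beta(\x) \le 0$ and $\tfrac{|\vw\cdot\x|-|\vw^\star\cdot\x|}{|\vw\cdot\x|+\gamma} \le 1$; for $\x \notin A$ it equals $(\tilde\beta + \beta(\x))\tfrac{|\vw\cdot\x| + |\vw^\star\cdot\x|}{|\vw\cdot\x|+\gamma} \ge \tilde\beta + \beta(\x)$, using the margin $|\vw^\star\cdot\x| \ge \gamma$ (so the fraction is $\ge 1$) together with $\tilde\beta + \beta(\x) \ge 0$. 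Hence
\[I_{\tilde\beta} \ge \E_{\x\sim\Dx}\Brack{\1\{\x\in A\}\Par{\tilde\beta - \beta(\x)} + \1\{\x\notin A\}\Par{\tilde\beta + \beta(\x)}}.\]

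Finally, write $\tilde\beta = \beta - (\beta - \tilde\beta)$ with $0 \le \beta - \tilde\beta < \eps$, so that $\tilde\beta \pm \beta(\x) = (\beta \pm \beta(\x)) - (\beta - \tilde\beta)$; since $\E_{\x\sim\Dx}[\1\{\x\in A\} + \1\{\x\notin A\}] = 1$, this yields
\[I_{\tilde\beta} \ge \E_{\x\sim\Dx}\Brack{\1\{\x\in A\}\Par{\beta - \beta(\x)} + \1\{\x\notin A\}\Par{\beta + \beta(\x)}} - (\beta - \tilde\beta).\]
The first term is precisely the quantity the proof of \Cref{lem:gradient_lb} identifies (via \Cref{eq:optimality-gap-equivalent-form}) as $2(\lzo(\vw) - \eta)$, which is $\ge 2\eps$ by hypothesis, so $I_{\tilde\beta} \ge 2\eps - (\beta - \tilde\beta)$, i.e.\ a separating-hyperplane margin of $\Omega(\eps)$ (to get the clean constant $2\eps$ of the statement one should read it with the discretization slack $\beta - \tilde\beta$ absorbed into $\eps$, or use a twice-finer noise grid). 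There is no real obstacle here beyond two bookkeeping points: (i) the sign condition $\tilde\beta + \beta(\x) \ge 0$ in the $\x\notin A$ case, which is what forces the (harmless) restriction $\tilde\beta \ge 0$; and (ii) the additive loss of $\beta - \tilde\beta$, the only cost of the relaxation. Everything else is a line-by-line transcription of \Cref{lem:gradient_lb} with $\beta$ replaced by $\tilde\beta$.
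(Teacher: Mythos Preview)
Your proof is correct and follows exactly the same route as the paper's: replace $\beta$ by $\tilde\beta$ in the $A$/$A^c$ casework of \Cref{lem:gradient_lb}, obtain $g(\x)\ge \tilde\beta\pm\beta(\x)$, and compare to the $\beta$ version via \Cref{eq:optimality-gap-equivalent-form}. You are in fact more careful than the paper on the two bookkeeping points you flag---the implicit need for $\tilde\beta\ge 0$ in the $\x\notin A$ step, and the fact that the argument as written yields $\ge 2\eps-(\beta-\tilde\beta)$ rather than the stated $\ge 2\eps$ (the paper's own proof carries the same $-\eps$ slack before ``repeating the steps of the previous proof'').
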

\begin{proof}
    The proof is almost identical to the proof of \Cref{lem:gradient_lb} except for a few steps. We highlight the differences. We reuse the notation from the previous proof.

    First consider $\x\notin A$, we observe that using the same argument as before, we now obtain
    \[
    g(\x)\geq \tilde{\beta}+\beta(\x)\geq  \beta+\beta(\x)-\epsilon.
    \]

    In the case of $\x\in A$, since $\beta(\x)\geq\beta\geq \tilde{\beta}$, we obtain 
    \[
    g(\x)\geq (\tilde{\beta}-\beta(\x))\geq \beta-\beta(\x)-\epsilon.
    \]

Using this, we can can complete the proof by repeating the steps of the previous proof.
\end{proof}

Having proved this, our algorithm is simple: run over the $\frac{1}{2\eps}$ choices of $\tilde{\beta}$ in $[0,\half]$ and run the algorithm from \Cref{theorem:halfspace-massart} for each choice, reusing the same samples in the different runs of the algorithm.
The correctness follows \Cref{lem:tolerant_gradient_lb} and the proof of \Cref{theorem:halfspace-massart} since one of the choices of parameters must lie in the interval $(1 - 2\eta -\epsilon, 1 - 2\eta]$.

\section{Potential improvements to \Cref{theorem:glm_intro}}\label{sec:glm_potential}

It is natural to ask whether the $\approx \eps^{-2}$ overhead of \Cref{theorem:glm_intro}, when compared to \Cref{thm:halfspace_intro}, can be removed. This overhead comes from the use of a weaker padding procedure in \Cref{lem:separating-hyperplane-massart-glm}, which can only guarantee a norm bound of $\approx (\eps\gamma)^{-1}$ rather than $\approx \gamma^{-1}$ in our gradient estimate. 

In this section, we sketch a strategy for reducing the sample complexity of learning Massart GLMs to ${O}((\eps\gamma)^{-2}))$ and describe the reasons why na\"ive implementations of this strategy fail.

First, let us describe why the rescaling used in the halfspace case, i.e., $(|\vw\cdot \x|+\gamma)^{-1}$ fails for GLMs. The difficulty is in the ``$B_2$'' case of \Cref{lem:separating-hyperplane-massart-glm}, where we use a multiple of $\beta(\x)$ to approximately pay for both $\beta(\x)$ and $|\sigma(\ws \cdot \x)|$.
Observe that in \Cref{eqn:advantage_glm}, the lower bound when $\alpha=1$ is exactly $\beta(\x)$, whereas the lower bound of $(2-\epsilon)\beta(\x)\geq \beta(\x)+|\sigma(\vw^\star\cdot \x)|-\epsilon$ achieved with a smaller $\alpha$ is crucial for our analysis, as described earlier. To avoid this, we propose a different approach. 

Our proposed approach is based on enforcing an artificial margin with respect to the current iterate $\vw$. Specifically, note that if $|\vw\cdot \x|$ was $\Omega(\gamma)$, then there is no need to add $\gamma$ in the denominator of the re-weighting. Instead, we can directly use \Cref{lem:glm_gradient_sketch} and $|\vw\cdot \x|^{-1} =  O(\gamma^{-1})$. Here we describe a way of enforcing this margin. We define the ``push-away'' operator $\T:\B^{d}\to \B^{d}$ as follows:
    \begin{equation}\label{eq:pushaway}
    \T(\x)=
    \begin{cases}
      \x &  |\vw\cdot \x|\geq \frac \gamma 3 \\
      \frac{\x+\frac{\gamma}{3}\cdot \sign(\vw\cdot \x)\widehat{\vw}}{\norm{\x+\frac{\gamma}{3}\cdot \sign(\vw\cdot \x)\widehat{\vw}}} &\text{ otherwise}
    \end{cases},
    \end{equation}
    Here and throughout, $\widehat{\vw} \defeq \frac{\vw}{\norm{\vw}}$ is the projection of $\vw$, clear from context, to the surface of $\ball^d$.
    
    It is straightforward to show by casework that for all $\x\in \B^{d}$, we have $|\vw\cdot \T(\x)|\geq \Omega(\gamma)\cdot \norm{\vw}$.  Also, for $\x \in \supp(D_{\x})$, $\sign(\vw^\star\cdot \x)=\sign(\vw^\star\cdot \T(\x))$ as $D_{\x}$ has $\Omega(\gamma)$ margin with respect to $\vw^\star$. For completeness, we provide a short proof of these properties here.
    
    \begin{lemma}[Properties of the push-away operator]
\label{lem:push_away_prop}
Given a vector $\vw\in \R^d$ and $\gamma\in (0,1]$, the function $\T$ defined in \eqref{eq:pushaway} has the following properties.
\begin{enumerate}
    \item For any $\x\in \B^d$, it holds that $|\vw\cdot \T(\x)|\geq \frac{\gamma}{6}\norm{\vw}$ and $\sign\big(\vw\cdot \T(\x)\big)=\sign(\vw\cdot \x)\,.$
    \item For any $\vv,\x\in \B^d$ with $|\vv\cdot \x|\geq \gamma$, it holds that $\sign(\vv\cdot \x)=\sign\big(\vv\cdot \T(\x)\big)$ and $|\vv\cdot \T(\x)|\geq \frac \gamma 3$. %
\end{enumerate}
\end{lemma}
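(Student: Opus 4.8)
The statement is a pair of elementary casework claims about the push-away operator $\T$. I would prove each of the two items by the natural case split dictated by the definition in \eqref{eq:pushaway}, using only the triangle inequality, the bound $\norm{\x} \le 1$ (so $\norm{\x + \frac{\gamma}{3}\sign(\vw\cdot\x)\widehat\vw} \le 1 + \frac\gamma3 \le \frac43$), and the fact that $\widehat\vw\cdot\vw = \norm{\vw}$. Throughout, write $s \defeq \sign(\vw\cdot\x)$ for brevity.

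\textbf{Item 1.} If $|\vw\cdot\x| \ge \frac\gamma3$ then $\T(\x) = \x$, and since $\gamma \le 1$ and $\norm{\vw}\le 1$ we trivially get $|\vw\cdot\T(\x)| = |\vw\cdot\x| \ge \frac\gamma3 \ge \frac\gamma6\norm{\vw}$, and the sign is unchanged. Otherwise, $\T(\x) = \frac{\z}{\norm\z}$ where $\z \defeq \x + \frac\gamma3 s\,\widehat\vw$. Compute $\vw\cdot\z = \vw\cdot\x + \frac\gamma3 s\norm{\vw}$; since $|\vw\cdot\x| < \frac\gamma3 \le \frac\gamma3\norm{\vw}/\norm{\vw}$... more carefully: $s(\vw\cdot\x) = |\vw\cdot\x| \ge 0$, so $s(\vw\cdot\z) = |\vw\cdot\x| + \frac\gamma3\norm{\vw} \ge \frac\gamma3\norm{\vw} > 0$. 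Hence $\sign(\vw\cdot\z) = s$, and since dividing by $\norm\z > 0$ preserves sign, $\sign(\vw\cdot\T(\x)) = s = \sign(\vw\cdot\x)$. For the magnitude, $|\vw\cdot\T(\x)| = \frac{|\vw\cdot\z|}{\norm\z} \ge \frac{\gamma\norm{\vw}/3}{4/3} = \frac\gamma4\norm{\vw} \ge \frac\gamma6\norm{\vw}$.

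\textbf{Item 2.} Fix $\vv,\x\in\B^d$ with $|\vv\cdot\x| \ge \gamma$. If $|\vw\cdot\x| \ge \frac\gamma3$, then $\T(\x) = \x$ and both conclusions are immediate ($|\vv\cdot\T(\x)| = |\vv\cdot\x| \ge \gamma \ge \frac\gamma3$). Otherwise $\T(\x) = \z/\norm\z$ with $\z = \x + \frac\gamma3 s\widehat\vw$, and $\norm\z \le \frac43$. The key point is that the perturbation added to $\x$ has norm exactly $\frac\gamma3$, so $|\vv\cdot\z - \vv\cdot\x| \le \norm{\vv}\cdot\frac\gamma3 \le \frac\gamma3$ by Cauchy--Schwarz. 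Hence $|\vv\cdot\z| \ge |\vv\cdot\x| - \frac\gamma3 \ge \gamma - \frac\gamma3 = \frac{2\gamma}{3} > 0$, which already shows $\sign(\vv\cdot\z) = \sign(\vv\cdot\x)$ (the $\frac\gamma3$-perturbation cannot flip a coordinate of size $\ge\gamma$), and dividing by $\norm\z>0$ preserves this, giving $\sign(\vv\cdot\T(\x)) = \sign(\vv\cdot\x)$. Finally $|\vv\cdot\T(\x)| = \frac{|\vv\cdot\z|}{\norm\z} \ge \frac{2\gamma/3}{4/3} = \frac\gamma2 \ge \frac\gamma3$.

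\textbf{Main obstacle.} There is essentially no obstacle here --- this is a routine verification. The only points requiring a modicum of care are (i) tracking the normalization denominator $\norm\z \le 1 + \frac\gamma3 \le \frac43$ so that the constants in the stated bounds ($\frac\gamma6$, $\frac\gamma3$) come out correctly with room to spare, and (ii) in Item 2, observing that it is the \emph{fixed} perturbation magnitude $\frac\gamma3$ (independent of $\norm\vw$) that makes the sign-preservation argument go through against the margin $\gamma$ of $\vv$. One should also note the push-away is applied with $\vw$ the current iterate, but the margin claim in Item 2 is stated for an arbitrary $\vv$ (to be instantiated as $\vw^\star$), which is exactly why the two items are separated.
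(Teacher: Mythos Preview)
Your proposal is correct and follows essentially the same approach as the paper: case split on whether $|\vw\cdot\x| \ge \frac\gamma3$, then bound $|\vw\cdot\z|$ (respectively $|\vv\cdot\z|$) directly and divide by an upper bound on $\norm{\z}$. Your use of $\norm{\z} \le 1 + \frac\gamma3 \le \frac43$ is slightly sharper than the paper's $\norm{\z} \le 2$, yielding extra slack in the constants; the one caveat is that your invocation of $\norm{\vw}\le 1$ in the trivial case of Item~1 is not among the stated hypotheses, but the paper's proof (``obvious'') implicitly relies on the same bound.
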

\begin{proof}
The first conclusion is obvious if $|\vw \cdot \x| \ge \frac \gamma 3$, and otherwise follows from the facts that $\big|\vw\cdot\big(\x+\frac{\gamma}{3}\sign(\vw\cdot \x)\widehat{\vw}\big)\big|=\big|\vw\cdot \x+\frac{\gamma}{3}\sign(\vw\cdot \x)\norm{\vw}\big|\geq \frac{\gamma}{3}\norm{\vw}$ and $\norm{\x+\frac{\gamma}{3}\sign(\vw\cdot \x)\tilde{\vw}}\leq 2$. The fact that $\sign\big(\vw\cdot \T(\x)\big)=\sign(\vw\cdot \x)$ follows directly from the definition of the operator.

We now prove the second conclusion. Since $\norm{\x+\frac{\gamma}{3}\sign(\vw\cdot \x)\widehat{\vw}}\leq 2$, we have that 
\[|\vv\cdot \T(\x)|\geq \half \big|\vv\cdot \x+\frac{\gamma}{3}\sign(\vw\cdot \x)\widehat{\vw}\cdot \vv\big|\geq \half \Par{\gamma - \frac \gamma 3} = \frac \gamma 3.\]
Similarly, $\sign(\vv\cdot \x)=\sign\big(\vv\cdot \T(\x)\big)$ since $\sign(\vv\cdot \x + a) =\sign(\vv\cdot \x)$ for $|a| \le \frac \gamma 3$.
\end{proof}
    To apply our observation, before updating the vector $\vw$, we may transform the input sample $\x\mapsto \T(\x)$ and then treat $\T(\x)$ as the sample. Since the sign of the true halfspace is unchanged and $\T(\x)$ now has $\Omega(\gamma)\cdot \norm{\vw}$ margin with respect to the halfspace $\vw$, \Cref{lem:glm_gradient_sketch} implies that there is a separating oracle $\vec g$ such that $\vec g\cdot (\vw-\vw^\star)\geq \epsilon$ and $\norm{\vec g} = O(\gamma^{-1} \norm{\vw}^{-1})$. 

    Thus, as long as we can enforce $\norm{\vw}\geq \Omega(1)$ for all iterates $\vw$ in our iterative method, using $\vec g$ to update the halfspace $\vw$ should imply a sample complexity of $O((\eps\gamma)^{-2})$. Because our hypotheses are always halfspaces, which are invariant to the scale of $\vw$, it is reasonable to conjecture there is a way to enforce this constraint. However, the non-convexity of the surface of the unit ball $\ball^d$ as a constraint set causes difficulties when combining with the iterative method regret analysis.
    
    Here we describe two natural attempts to enforce $\norm{\vw} = \Omega(1)$ throughout the course of our iterative method, and the corresponding bottlenecks encountered in the analysis. We leave overcoming these bottlenecks as an interesting open direction for future work.
    
    \paragraph{Non-convex projections.} One straightforward approach to enforcing a lower bound on $\norm{\vw}$ is by modifying our projected gradient updates to only be over the surface of the unit $\ball^d$. An equivalent view is: rather than using a scaled gradient estimate $\lam \vec g$ to update $\vw$, we can instead update
    $\vw\leftarrow\vw-\lambda \vec g^{\perp}$ (before projection), where $\vec g^{\perp}$ is the component of $\vec g$ perpendicular to $\vw$. This fixes the issue of lower bounding $\norm{\vw}$; it remains to show $\vec g^{\perp}\cdot (\vw-\vw^\star)$ is large enough. 
    
    With a modified gradient $\vec g\defeq (\sigma(\vw\cdot \x\cos \theta)-y)\frac{\T(\x)}{|\vw\cdot \T(\x)|}$, where $\theta$ is the angle between $\vw$ and $\vw^\star$, we were able to show that $\g^{\perp}\cdot (\vw-\vw^\star)\geq \epsilon \cos \theta$. This suggests that we can make geometric improvement on $\theta$. However, we were unable to complete this argument due to two hurdles. 
    The first issue is caused by the stochastic nature of our update, and the second is simply that we do not know $\theta$, and hence must proceed via using estimates of it which hold with small failure probability. 
    
    With a randomly-initialized $\vw^0$, we can ensure $\cos \theta^{0}= \Omega(d^{-1/2})$ with constant probability. However, it is not clear how to argue $\cos \theta$ does not drop after this. If the update were deterministic instead of stochastic, then we can argue that the angle reduces deterministically and $\cos \theta$ roughly doubles every step resulting in a sample complexity that is at most $O(\log (d) \cdot (\gamma\eps)^{-2})$. However, since our updates are stochastic, it is unclear how to continue the argument as it appears the noise in our gradient estimates may dominate the deterministic signal making progress. Moreover, our argument that using our modified gradient $\vec g$ suffices to make progress is rather brittle to the estimate of $\theta$. In particular, we cannot enforce tight-enough bounds on $\theta$ due to the stochasticity of gradient estimates for the rest of the argument to go through.
    
    \paragraph{Norm-dependent step sizes.} Another approach to enforce a scale-dependent progress bound is by modifying our step size scaling to take iterate norms into account. This strategy is advantageous compared to the prior angle-based modification, in that $\norm{\vw}$ for an iterate $\vw$ is always a known quantity, so we do not need to estimate bounds on it. However, this idea is also hindered by our stochastic updates, which cause too much instability in $\norm{\vw}$ to make sufficient progress.

    Concretely, for $\vw$ satisfying $\lzo(\vw) \ge \optrcn + \frac \tau 2 + \Omega(\eps)$, using the rescaled gradient estimate
    \[\g^t\defeq(\sigma(\widehat{\vw}^t \cdot \x)-y) \cdot \frac{\Tt(\x)}{|\vw^t\cdot \Tt(\x)|}\cdot\norm{\vw^t} = (\sigma(\widehat{\vw}^t \cdot \x)-y) \cdot \frac{\Tt(\x)}{|\widehat{\vw}^t\cdot \Tt(\x)|},\] 
    we obtain that $\g^t\cdot (\widehat{\vw}^t-\vw^\star)\geq \epsilon$ using a similar argument to \Cref{lem:separating-hyperplane-massart-glm}. Also, note that because the push-away operator $\Tt$ enforces a margin, we have the bound $\|\vg^t\| = O(\gamma^{-1})$.
    
    It remains to show that using these updates in our iterative method efficiently yields a     
    low-error halfspace $\vw$. Following \Cref{lem:sgd}, a natural  potential to track is $\Phi_t \defeq \|\widehat{\vw}^t -\vw^\star\|^2$. Letting 
    \[1+\delta_t \defeq \frac{\norm{\vw^{t}}}{\norm{\vw^{t + 1}}}\] 
    govern the iteration's norm stability,
    by using norm-dependent step sizes $\lam_t \gets \lam \norm{\vw^{t}}$, we have
    \begin{align*}
        \Phi_{t + 1} &=\norm{\widehat{\vw}^{t + 1}-\vw^\star}^2\\
        &\le \left\|{\frac{\vw^{t}-\lambda_t\g^t}{\norm{\vw^{t + 1}}}-\vw^\star}\right\|^2 = \left\|\widehat{\vw}^{t}\left(\frac{\norm{\vw^{t}}}{\norm{\vw^{t + 1}}}\right)-\vw^\star-\frac{\lambda_t\g^t}{\norm{\vw^{t + 1}}}\right\|^2\\
        &= \left\|\widehat{\vw}^{t}(1+\delta_t)-\vw^\star-\frac{\lambda_t\g^t}{\norm{\vw^{t + 1}}}\right\|^2 \\
        &\leq \left(\left\|\widehat{\vw}^{t}-\vw^\star-\frac{\lambda_t\g^t}{\norm{\vw^{t + 1}}}\right\|+|\delta_t|\right)^2\\
        &= \Phi_{t} -2\frac{\lambda_t\g^t}{\norm{\vw^{t + 1}}}\cdot(\widehat{\vw}^{t}-\vw^\star)+\lambda^2_t\frac{\norm{\g^t}^2}{\norm{\vw^{t + 1}}^2}+2|\delta_t|\left\|\widehat{\vw}^{t}-\vw^\star-\frac{\lambda_t\g^t}{\norm{\vw^{t + 1}}}\right\|+\delta^2_t\\
        &\leq \Phi_{t} -2\lambda\epsilon+O\Par{\frac{\lam^2}{\gamma^2}} +O\Par{ |\delta_t|\Par{ \lam \eps + 1 + \frac{\lam}{\gamma}} + \delta_t^2}.
    \end{align*}
    If the right-hand side only contained the first three terms, i.e., $\delta_t = 0$, then choosing $\lambda=\Theta(\gamma^2\epsilon)$ makes $\Phi_t$ drop to $0$ in $O((\eps\gamma)^{-2})$ steps with constant probability (\Cref{lem:sgd}) as desired.
    
    However, it is not clear how to control the remaining terms involving $\delta_t$. This parameter controls the stability of iterate norms. It is straightforward to check that due to the form of our updates, i.e., $\vw_{t + 1} \gets \vw_t - \lam_t \vec g_t$ (before projection),
    $|\delta_t| = O(\lam + \frac{\lam^2}{\gamma^2})$:
    \begin{align*}\norm{\vw_{t + 1}}^2 &\le \norm{\vw_t}^2 -\underbrace{2 \lam_t (\sigma(\widehat{\vw}^t \cdot \x)-y) \cdot \frac{\vw^t \cdot \Tt(\x)}{|\widehat{\vw}^t\cdot \Tt(\x)|}}_{\in [\pm O(\lam) \norm{\vw^t}^2]} + \underbrace{\lam_t^2 \norm{(\sigma(\widehat{\vw}^t \cdot \x)-y) \cdot \frac{\Tt(\x)}{|\widehat{\vw}^t\cdot \Tt(\x)|}}^2}_{\in [\pm O(\lam^2 \gamma^{-2}) \norm{\vw^t}^2]}\\
    &\le  \norm{\vw_t}^2 \cdot \Par{1 \pm O\Par{\lam + \frac{\lam^2}{\gamma^2}}}.
    \end{align*}
    Unfortunately, this bound is insufficient to guarantee convergence. For instance, the additional instability term $O(|\delta_t|)$ may already scale as $\Theta(\lam)$, outweighing the $\Omega(\lam\eps)$ progress achieved. Thus, it seems we need a better bound on iterate norm stability to go forward with this approach. 
\end{document}